\newtheorem{definition}{Definition}
\newtheorem{theorem}{Theorem}
\newtheorem{example}{Example}
\newtheorem{lemma}{Lemma}
\newtheorem{assumption}{Assumption}
\newcommand{\R}{\mathbb R}
\newcommand{\E}{\mathbb E}
\newcommand{\dist}{\mathrm{d_H}}
\newcommand{\LS}{\mathrm{LS}}
\newcommand{\GS}{\mathrm{GS}}
\newcommand{\Sm}{\mathrm{S}}
\newcommand{\Var}{\textrm{Var}}
\newcommand{\DS}{\displaystyle}
\newcommand{\EE}{\mathbb{E}}
\newcommand{\PP}{\mathbb{P}}
\begin{document}

\begin{frontmatter}

\title{Propose, Test, Release: Differentially private estimation with high probability}
\runtitle{DP location estimators}



\author{Marco Avella-Medina \thanks{Columbia University, Department of Statistics, marco.avella@columbia.edu} \and Victor-Emmanuel Brunel \thanks{ENSAE ParisTech, Department of Statistics, victor.emmanuel.brunel@ensae.fr}}

\runauthor{M. Avella-Medina and V.-E. Brunel}

\setattribute{abstractname}{skip} {{\bf Abstract:} }

\begin{abstract}

We derive concentration inequalities for differentially private median and mean estimators building on the ``Propose, test, release'' (PTR) mechanism introduced by \cite{dworkandlei2009}. We introduce a new general version of the PTR mechanism that allows us to derive high probability error bounds for differentially private estimators. Our algorithms provide the first statistical guarantees for differentially private estimation of the median and mean without any boundedness assumptions on the data, and without assuming that the target population parameter lies in some known bounded interval. Our procedures do not rely on any truncation of the data and provide the first sub-Gaussian high probability bounds for differentially private median and mean estimation, for possibly heavy tailed random variables.

\end{abstract}

\begin{keyword}
\kwd{Differential Privacy}
\kwd{Location estimators}
\kwd{Sub-Gaussian deviations}
\kwd{Median}
\kwd{Median of Means estimator}
\end{keyword}

\end{frontmatter}

\section{Introduction}

Differential privacy has emerged as the main approach to privacy  in the theoretical computer science and machine learning literature following the path breaking work of \cite{dworketal2006}. This paradigm provides a rigorous mathematical framework for the study and design of privacy-preserving algorithms. This setting assumes that there is a trusted curator that holds data containing some possibly sensitive records of $n$ individuals. The goal of differential privacy is to simultaneously protect every individual record while releasing global characteristics of the database \cite{dworkandroth2014}. This is achieved by constructing randomized algorithms that release noisy versions of the desired outputs, where the noise level is calibrated to prevent any individual level data to be identifiable by querying the database.

Even though the machine learning community has been very prolific in developing differentially private algorithms for complex settings including multi-armed bandit problems \cite{mishraandthakurta2015, tossouandimitrakakis2016,shariffandsheffet2018},  high-dimensional regression \cite{kiferetal2012,talwaretal2015} and  deep learning \cite{abadietal2016,lecuyeretal2018},  some  fundamental statistical questions are only starting to be understood. For example, the first statistical minimax rates of convergence under differential privacy were recently established in \cite{duchietal2018,caietal2019}. Some earlier work framing differential privacy in traditional statistics terms include  \cite{wassermanandzhou2010, lei2011,smith2011,chaudhuriandhsu2012,karwaandslavkovic2016}.  Recent work  has also sought to develop differential privacy tools for statistical inference and hypothesis testing \cite{gaboardi2016,sheffet2017,avella2019,barrientosetal2019}

In this paper we study the simple statistical problem of location parameter estimation and study the non-asymptotic deviations of differentially private location parameter estimators. More specifically, we consider the problem of constructing median and mean estimators that achieve sub-Gaussian finite sample deviations, even when applied to heavy tailed data.

\subsection{Motivation}
It is well known that given a random iid sample $X_1,\dots,X_n$ of sub-Gaussian random variables with $\EE[X_1]=\mu$ and $\Var[X_1]=\sigma^2$,  the empirical mean $\overline{X}_n=\frac{1}{n}\sum_{i=1}^nX_i$ satisfies with probability at least $1-\tau$
\begin{equation*}
 |\overline{X}_n-\mu|\leq \sqrt{\frac{2\sigma^2\log(2/\tau)}{n}}.
\end{equation*}
The accuracy of the empirical mean estimator expressed in the above deviation inequality is a direct consequence of the sub-Gaussian assumption, and such an error bound is called a \textit{sub-Gaussian deviation for the confidence level $\tau$}. In fact the dependence on $\tau$ in the error worsens significantly when the distribution does not have a moment generating function. In particular, when $X_1$ is only assumed to have two finite moments, the error of the empirical mean cannot get smaller in order than $\sqrt{2\sigma^2/(n\tau)}$ as shown in \cite{catoni2012}.  
For the empirical median on the other hand, one does not even need to assume any finite moments in order to establish similar sub-Gaussian deviations. However, there exist estimators of the mean that do achieve sub-Gaussian deviations even when the data only have two finite moments: For instance, the \textit{median of means} estimator \cite{lerasleandoliveira2011}, which first symmetrizes the distribution by taking empirical means of disjoint subsamples of the data, so the population median of these empirical means is close enough to the population mean, and then aggregates these empirical means by taking their empirical median.

In light of \cite{caietal2019}, one may naturally wonder how differential privacy will affect the deviation bounds discussed above. The statistical minimax rates established in \cite{caietal2019} show that the rates of convergence of differentially private mean estimators are described by two terms. The first one correspond to the usual parametric $1/\sqrt{n}$ convergence, while the second one is driven by the differential privacy tuning parameters and  and is of the order $1/n$. Consequently, for large $n$, differential privacy does not come at the expense of slower statistical convergence rates, at least in expectation. However, a notorious technical difficulty renders the study of non-asymptotic deviations challenging for differentially private mean and median estimators: existing algorithms either require the input data to be bounded or assume that the parameter lies in a known interval; see for example \cite{dworketal2006,nissimetal2007,lei2011,smith2011,bassilyetal2014}. This is clearly unsatisfactory from a theoretical and practical perspective as it rules out common distributions used in statistical modeling such as the normal, gamma and t-distributions, just to name a few. This is particularly disturbing for median estimators since the usual non-private empirical median does not even need the existence of finite moments in order to exhibit sub-Gaussian deviations. 

Moreover, to the best of our knowledge, it is not known whether it is possible to find differentially private estimators of the mean that achieve sub-Gaussian deviations even when the data are heavy tailed.

\subsection{Our contributions}

\begin{itemize}

\item We greatly improve the  ``Propose, test, release" (PTR) mechanism that was introduced in \cite{dworkandlei2009}. At the heart of this approach lies the idea of avoiding the usual worst case scenario noise calibration that is omnipresent in the design of differentially private algorithms. This goal is achieved by first exploring, in a privacy-preserving way, whether the data has a favorable configuration that permits to add only a small noise to the desired output. When the data has a bad configuration, the algorithm halts giving a ``no reply''.  We revisit the original mechanism for which only very weak guarantees were given, such as consistency of the estimators. We derive a new refined version of this mechanism that is more general, provides a tight control of the probability of ``no reply'' and minimizes the noise added to the desired output, thus significantly improving the statistical properties of the resulting differentially private estimators.

\item
We provide the first high probability bounds for differentially private estimation of the median and mean, under weak assumptions, similar to the ones required by their usual non-private counterparts. Namely, our median estimator is shown to exhibit sub-Gaussian deviations under the standard assumption requiring the underlying distribution to have a density that is bounded from below in a neighborhood of the population median. Similarly, we provide a median-of-means estimator that is shown to estimate the population mean with sub-Gaussian errors by assuming only the existence of three moments.

\item To the best of our knowledge, our algorithm for mean estimation is the first differentially private estimator of the mean for which one can prove good statistical guarantees without any boundlessness assumptions on the data and without assuming that the population mean lie in some known bounded interval. In particular, our construction does not rely on any truncation of the data and gives optimal, sub-Gaussian deviations when the data are heavy tailed.

\end{itemize}

\section{Preliminaries}

Before describing our techniques we will need to introduce definitions and basic tools from differential privacy that will serve as building blocks for our procedures.

\subsection{Definitions}

For $x=(x_1,\ldots,x_n)\in\R^n$, we denote by $x_{(1)}, \ldots, x_{(n)}$ the reordered coordinates of $x$ in nondecreasing order, i.e. $\DS \min_{1\leq i\leq n} x_i=x_{(1)}\leq \ldots\leq x_{(n)}=\max_{1\leq i\leq n} x_i$. We let $\ell=\lfloor n/2\rfloor$ and $\hat m(x)=x_{(\ell)}$ be the empirical (left) median of $x$.
For any two vectors $x,x'\in\R^n$, we define their Hamming distance $\dist(x,x')$ as the number of coordinates that differ in $x$ and $x'$: $\DS \dist(x,x')=\#\{i=1,\ldots,n:x_i\neq x_i'\}$, where $\#$ stands for cardinality. For all $x\in\R^n$ and $k\geq 0$, we let $B_{\textsf{H}}(x,k)$ be the set of all vectors that differ from $x$ in at most $k$ coordinates, i.e., $B_{\textsf{H}}(x,k)=\{x'\in\R^n:\dist(x,x')\leq k\}$.

In what follows, we refer as \textit{random function} to any function $\tilde h:\R^n\to\R$ such that for all $x\in\R^n$, $\tilde h(x)$ is a Borelian random variable. In this paper, we will use the hat sign to denote non-randomized estimators, and the tilde sign to denote their randomized version. For instance, $\hat h(x)$ would be a nonrandom estimator, i.e., the product of a deterministic algorithm that has input $x\in\R^n$, whereas $\tilde h(x)$ would be a randomized estimator, i.e., the product of a randomized algorithm.

\begin{definition}
	Let $\varepsilon,\delta>0$. A random function $\tilde h$ is called $(\varepsilon,\delta)$-differentially private if and only if for each pair $x,x'\in\R^n$ with $\dist(x,x')\leq 1$ and for all Borel sets $B\subseteq \R$, 
$$\PP[\tilde h(x)\in B]\leq e^\varepsilon \PP[\tilde h(x')\in B]+\delta.$$
\end{definition}

Note that when $\varepsilon=0$, $(0,\delta)$-differential privacy means that the total variation distance between the laws of $\tilde h(x)$ and $\tilde h(x')$ must be bounded by $\delta$, whenever $\dist(x,x')\leq 1$. On the other hand, when $\delta=0$, $(\varepsilon,0)$-differential privacy means that a rescaled version of the total variation distance between the laws of $\tilde h(x)$ and $\tilde h(x')$ must be bounded by $e^\varepsilon-1$, whenever $\dist(x,x')\leq 1$.

The following notions of sensitivity of an output function $h$ are central in the construction of differentially private procedures. In particular, these measures of sensitivity are used in the most basic algorithms that make some output $h(x)$ private by simply releasing instead $h(x)+u$, where $u$ is an independent noise term whose variance is scaled employing these various notions of sensitivity. 

\begin{definition}
	Let $h:\R^n\to\R$ be a given function. 
	\begin{enumerate}
		\item The local sensitivity of $h$ maps any data point $x\in\R^n$ to the (possibly infinite) number $$\LS_h(x)=\sup\{|h(x')-h(x)|: x'\in\R^n, \dist(x,x')\leq 1\}.$$
		 
		\item The global sensitivity of $h$ is the (possibly infinite) number $$\GS_h=\sup_{x\in\R^n}\LS_h(x).$$ 
		\item For all $\beta>0$, the $\beta$-smooth sensitivity of $h$ is the mapping 
	$$\DS \Sm_h^{(\beta)}(x)=\sup_{x'\in\R^n}\left(e^{-\beta \dist(x,x')}LS_{h}(x')\right), \quad x\in\R^n.$$
	\end{enumerate}
\end{definition}
Note that the global sensitivity is a constant number, which does not depend on the point at which the function $h$ is computed.

\begin{example}
\begin{enumerate}
	\item It is easy to see that for all $x\in\R^n$, the local sensitivity of the empirical median is
	$$\LS_{\hat m}(x)=\max\left(x_{(\ell+1)}-x_{(\ell)},x_{(\ell)}-x_{(\ell-1)}\right).$$
	Moreover, for all $\beta>0$ and all $x\in\R^n$, $\DS \Sm_{\hat m}^{(\beta)}(x)=\infty$, and $\GS_{\hat m}=\infty$.
	\item Let $\DS \hat \mu(x)=n^{-1}\sum_{i=1}^n x_i, x\in\R^n$ be the empirical mean function. Then, all the above quantities are infinite.
\end{enumerate}	
\end{example}

In order to enforce the differential privacy of an estimator, usual methods would randomize the estimator by adding some noise to it, that is calibrated by the global or the smooth sensitivity of the estimator. We discuss two such approaches next.

\subsection{Laplace and Gaussian mechanisms}

The Laplace and Gaussian mechanisms are two simple tools used in the differential privacy literature in order to construct private algorithms. The basic idea of these techniques is to make deterministic functions private by adding  random noise calibrated using their sensitivity to the data. Let us review some well known results  for two variants of these constructions of  differentially private estimators.
Recall that the Laplace distribution with parameter $\lambda>0$ is the continuous probability distribution with density $(\lambda/2)e^{-\lambda|u|}, u\in\R$. We denote this distribution by $\mathrm{Lap}(\lambda)$. 

The first part of the following theorem is due to \cite{dworketal2006}. It gives a very simple way to make a function $h$ differentially private. However, it requires the very strong assumption that $h$ has a finite global sensitivity. The second part of the theorem can be found in \cite[Appendix A]{dworkandroth2014}.

\begin{theorem} \label{thm:DP-GS}
	Let $h:\R^n\to\R$ be a function with finite global sensitivity. 
	\begin{enumerate}
	\item
	Let $Z$ be a Laplace random variable with parameter $1$. For all $\varepsilon>0$, the random function $\DS \tilde h(x)=h(x)+\frac{Z}{\varepsilon}\GS_h, x\in\R^n$, is $(\varepsilon,0)$-differentially private.
	\item
	Let $Z$ be a standard normal random variable. For all $\varepsilon>0$, the random function $\DS \tilde h(x)=h(x)+\frac{\sqrt{2\log(1.25/\delta)}Z}{\varepsilon}\GS_h, x\in\R^n$, is $(\varepsilon,\delta)$-differentially private.
	\end{enumerate}
\end{theorem}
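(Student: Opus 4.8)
The plan is to prove each part by comparing the probability densities of $\tilde h(x)$ and $\tilde h(x')$ directly, for an arbitrary pair $x,x'\in\R^n$ with $\dist(x,x')\leq 1$. Throughout I set $\Delta=|h(x)-h(x')|$, which satisfies $\Delta\leq\GS_h$ by definition of the global sensitivity, and I may assume $\GS_h>0$, since otherwise any two points in $\R^n$ are joined by a chain of unit Hamming steps along which $h$ is constant, so $h$ is constant and both claims are immediate.

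\textbf{Part 1 (Laplace mechanism).} The random variable $\tilde h(x)=h(x)+(Z/\varepsilon)\GS_h$ has density $t\mapsto \frac{\varepsilon}{2\GS_h}\exp(-\frac{\varepsilon}{\GS_h}|t-h(x)|)$ on $\R$, and likewise with $x'$ in place of $x$. For any $t\in\R$ the reverse triangle inequality gives $|t-h(x')|-|t-h(x)|\leq|h(x)-h(x')|=\Delta\leq\GS_h$, hence the pointwise ratio bound
\begin{equation*}
\frac{p_{\tilde h(x)}(t)}{p_{\tilde h(x')}(t)}=\exp\!\left(\frac{\varepsilon}{\GS_h}\big(|t-h(x')|-|t-h(x)|\big)\right)\leq e^{\varepsilon}.
\end{equation*}
Integrating this over any Borel set $B\subseteq\R$ yields $\PP[\tilde h(x)\in B]\leq e^{\varepsilon}\PP[\tilde h(x')\in B]$, i.e. $(\varepsilon,0)$-differential privacy.

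\textbf{Part 2 (Gaussian mechanism).} Write $\sigma=\sqrt{2\log(1.25/\delta)}\,\GS_h/\varepsilon$, so $\tilde h(x)$ is Gaussian with mean $h(x)$ and variance $\sigma^2$. The key object is the privacy-loss variable $L=\log\big(p_{\tilde h(x)}(Y)/p_{\tilde h(x')}(Y)\big)$ with $Y\sim\tilde h(x)$: a short computation with Gaussian densities and the substitution $Y=h(x)+\sigma G$, $G\sim\mathcal N(0,1)$, gives $L=\frac{\Delta^2}{2\sigma^2}\pm\frac{\Delta}{\sigma}G$, the sign being that of $h(x)-h(x')$ and irrelevant by symmetry of $G$. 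For any Borel set $B$ I would split on $\{L\leq\varepsilon\}$:
\begin{equation*}
\PP[\tilde h(x)\in B]\leq \PP\big[\tilde h(x)\in B,\ L\leq\varepsilon\big]+\PP[L>\varepsilon]\leq e^{\varepsilon}\PP[\tilde h(x')\in B]+\PP[L>\varepsilon],
\end{equation*}
where in the first term one uses that on $\{L\leq\varepsilon\}$ the pointwise inequality $p_{\tilde h(x)}\leq e^{\varepsilon}p_{\tilde h(x')}$ holds, so integrating it over $B\cap\{L\leq\varepsilon\}$ and then enlarging to $B$ gives the bound. It then remains to check $\PP[L>\varepsilon]\leq\delta$; since $\Delta\mapsto\frac{\sigma\varepsilon}{\Delta}-\frac{\Delta}{2\sigma}$ is decreasing, the tail is worst at $\Delta=\GS_h$, where $\PP[L>\varepsilon]=\PP\big[G>\sqrt{2\log(1.25/\delta)}-\varepsilon/\big(2\sqrt{2\log(1.25/\delta)}\big)\big]$, and bounding this via a Gaussian tail estimate and simplifying with the chosen $\sigma$ closes the argument.

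\textbf{Where the difficulty lies.} Part 1 is essentially mechanical. The real work is the last step of Part 2: one must reduce the two-sided privacy loss to the worst case $\Delta=\GS_h$ while tracking both signs, and then make the Gaussian tail estimate sharp enough to produce exactly the constant $\sqrt{2\log(1.25/\delta)}$ — the crude bound $\PP[G>t]\leq e^{-t^2/2}$ is not quite sharp enough, and one needs a refined (Mills-ratio type) tail bound together with the implicit regime $\varepsilon<1$, $\delta<1$, as in \cite[Appendix A]{dworkandroth2014}. I would therefore carry out the structural decomposition above in full and defer to that reference for the precise numerical calibration of the tail.
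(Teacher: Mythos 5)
The paper itself does not supply a proof of this theorem — it cites \cite{dworketal2006} for the Laplace part and \cite[Appendix~A]{dworkandroth2014} for the Gaussian part — so the right comparison is against those standard arguments, and your proposal reproduces them faithfully. Part~1 is complete and correct: the density-ratio bound $p_{\tilde h(x)}(t)/p_{\tilde h(x')}(t)\le e^{\varepsilon}$ follows from the reverse triangle inequality and $\Delta\le\GS_h$, and integrating over $B$ gives $(\varepsilon,0)$-DP directly. Part~2 correctly identifies the structure of the Dwork--Roth proof: introduce the privacy-loss variable $L$, compute $L=\frac{\Delta^2}{2\sigma^2}\pm\frac{\Delta}{\sigma}G$, split on $\{L\le\varepsilon\}$ to get $\PP[\tilde h(x)\in B]\le e^{\varepsilon}\PP[\tilde h(x')\in B]+\PP[L>\varepsilon]$, and reduce to the worst case $\Delta=\GS_h$ by monotonicity of the threshold $\frac{\sigma\varepsilon}{\Delta}-\frac{\Delta}{2\sigma}$. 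You also rightly flag that the crude bound $\PP[G>t]\le e^{-t^2/2}$ is not sharp enough to yield the constant $\sqrt{2\log(1.25/\delta)}$, that one needs a Mills-ratio-type estimate together with the implicit restriction $\varepsilon<1$, and you defer that calibration to the cited reference — which is consistent with the paper's own treatment. I see no gap; the only caveat is that the deferred step is where the specific constant actually comes from, so a self-contained proof would still require carrying out that Gaussian tail computation.
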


The following result is due to \cite{nissimetal2007}, and allows for less restrictive functions $h$.

\begin{theorem} \label{thm:DP-SS}
	Let $h:\R^n\to\R$ and assume that for all $k=1,\ldots,n$ and all $x\in\R^n$, $\LS_h^{(k)}(x)<\infty$. 
	\begin{enumerate}
	\item
	Let $Z$ be a Laplace random variable with parameter $1$. Let $\varepsilon,\delta>0$ and set $\DS \beta=\frac{\varepsilon}{2\log(1/\delta)}$. Then, the random function $$\tilde h(x)=h(x)+\frac{2Z}{\varepsilon}\Sm_h^{(\beta)}(x), \quad x\in\R^n,$$ is $(\varepsilon,\delta)$-differentially private.
	\item
	Let $Z$ be a standard normal random variable. Let $\varepsilon,\delta>0$ and set $\DS \beta=\frac{\varepsilon}{4\{1+\log(2/\delta)\}}$. Then, the random function $$ \tilde h(x)=h(x)+\frac{5\sqrt{2\log(2/\delta)}Z}{\varepsilon}\Sm_h^{(\beta)}(x), \quad x\in\R^n,$$ is $(\varepsilon,\delta)$-differentially private.
	\end{enumerate}
\end{theorem}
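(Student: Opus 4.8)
The plan is to follow the smooth-sensitivity argument of \cite{nissimetal2007}, working directly with the one-dimensional densities of $\tilde h(x)$ and $\tilde h(x')$. The first step is to record two structural facts about $\Sm_h^{(\beta)}$: it dominates the local sensitivity, $\Sm_h^{(\beta)}(x)\ge\LS_h(x)$ for every $x$ (take $x''=x$ in the defining supremum), and it is \emph{$\beta$-smooth}, meaning $\Sm_h^{(\beta)}(x)\le e^\beta\,\Sm_h^{(\beta)}(x')$ whenever $\dist(x,x')\le 1$. The latter follows from the triangle inequality $\dist(x',x'')\le 1+\dist(x,x'')$, which gives $e^{-\beta\dist(x,x'')}\LS_h(x'')\le e^\beta\,e^{-\beta\dist(x',x'')}\LS_h(x'')\le e^\beta\Sm_h^{(\beta)}(x')$ for every $x''$, after which one takes the supremum over $x''$. (The hypothesis that $\LS_h^{(k)}(x)<\infty$ for all $k$ ensures each $\Sm_h^{(\beta)}(x)$ is finite, since the Hamming distance on $\R^n$ takes only the values $0,\dots,n$ and the local sensitivities of points at Hamming distance $k$ from $x$ are controlled by $\LS_h^{(\cdot)}(x)$; if $\Sm_h^{(\beta)}\equiv 0$ then $h$ is constant and the statement is trivial, so the noise scale may be assumed positive.)

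Next, fix neighbours $x,x'$, write $S=\Sm_h^{(\beta)}(x)$ and $S'=\Sm_h^{(\beta)}(x')$, and let $f_x,f_{x'}$ denote the densities of $\tilde h(x),\tilde h(x')$; the noise added to $h(x)$ has the form $cSZ$, with $c=2/\varepsilon$ in part (1) and $c=5\sqrt{2\log(2/\delta)}/\varepsilon$ in part (2). From the facts above, $|h(x)-h(x')|\le\LS_h(x)\le S$ (and symmetrically $\le S'$), while $|\log(S'/S)|\le\beta$. Introduce the ``good'' set $G=\{t:f_x(t)\le e^\varepsilon f_{x'}(t)\}$; then for every Borel $B$, splitting $B$ into $B\cap G$ (where the likelihood ratio is at most $e^\varepsilon$) and $B\cap G^c$ gives $\PP[\tilde h(x)\in B]\le e^\varepsilon\PP[\tilde h(x')\in B]+\PP[\tilde h(x)\in G^c]$, so it suffices to prove $\PP[\tilde h(x)\in G^c]\le\delta$. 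Here I would write $\log\big(f_x(t)/f_{x'}(t)\big)$ explicitly: with $u=t-h(x)$ and $\mu=h(x')-h(x)$ it is $\log(S'/S)$ plus a term of the form $\tfrac1c\big(|u-\mu|/S'-|u|/S\big)$ in part (1), and $\log(S'/S)$ plus a quadratic in $u$ whose leading coefficient is a positive multiple of $S^{-2}-(S')^{-2}$ in part (2). When $S\le S'$ this quantity is bounded above, and the choices of $\beta$ together with the multipliers $2$ and $5\sqrt{2\log(2/\delta)}$ force the bound to be at most $\varepsilon$, so that $G^c=\varnothing$. When $S>S'$ the log-ratio is unbounded, but $G^c$ is contained in a tail event $\{|u|\ge\rho\}$ with an explicit $\rho$; since under $\tilde h(x)$ we have $u=cSZ$, this is the event $\{|Z|\ge\rho/(cS)\}$, and the parameters have been calibrated so that $\rho/(cS)$ is (essentially) $\log(1/\delta)$ in part (1) and $\sqrt{2\log(2/\delta)}$ in part (2). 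Plugging into $\PP[|Z|\ge t]=e^{-t}$ for $Z\sim\mathrm{Lap}(1)$, resp. $\PP[|Z|\ge t]\le e^{-t^2/2}$ for $Z\sim N(0,1)$, yields $\PP[\tilde h(x)\in G^c]\le\delta$.

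I expect the main obstacle to be the case $S>S'$, and especially part (2): one must extract from the quadratic log-ratio a clean one-sided description of $G^c$, track how $|\mu|\le S$, the bound $S/S'\le e^\beta$, and the smoothness slack $S-S'\le S(1-e^{-\beta})\le\beta S$ enter the threshold $\rho$, and verify that the deliberately lossy constants (the $5$, the $\sqrt2$, and the ``$1+$'' in $\beta=\varepsilon/(4\{1+\log(2/\delta)\})$) are generous enough to make the final probability at most $\delta$ rather than merely a constant multiple of it. A more modular alternative is to route the argument through the notion of an $(\alpha,\beta)$-\emph{admissible} noise density --- one satisfying a sliding property under translations of size $\le\alpha$ and a dilation property under rescalings by $e^{\pm\lambda}$ with $|\lambda|\le\beta$ --- prove once that adding $\Sm_h^{(\beta)}(x)/\alpha$ times such noise is $(\varepsilon,\delta)$-differentially private, and then check that $\mathrm{Lap}(1)$ and $N(0,1)$ are admissible with the stated constants; this isolates the same tail estimates but compartmentalises them.
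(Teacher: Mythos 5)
The paper does not actually prove this theorem: it is stated as a known result and attributed to Nissim, Raskhodnikova, and Smith (2007), so there is no ``paper's own proof'' to compare against. Assessing the proposal on its own terms, your outline is a faithful reconstruction of the argument in that cited paper. The two structural facts you begin with (domination of $\LS_h$, and $\beta$-smoothness $\Sm_h^{(\beta)}(x)\le e^\beta\Sm_h^{(\beta)}(x')$ for neighbours), the reduction to bounding $\PP[\tilde h(x)\in G^c]\le\delta$ where $G$ is the likelihood-ratio ``good'' set, the case split on whether the noise scale shrinks or grows between $x$ and $x'$, and the Laplace/Gaussian tail estimates are precisely the skeleton of the original proof. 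The ``admissible noise'' alternative you propose at the end is in fact how Nissim et al.\ organise it: they isolate a \emph{sliding} property (translation by $\le\alpha$) and a \emph{dilation} property (rescaling by $e^{\pm\lambda}$, $|\lambda|\le\beta$), each required to spend at most $\varepsilon/2$ and $\delta/2$, and then verify both for $\mathrm{Lap}(1)$ and $N(0,1)$; this is where the otherwise mysterious constants $2$, $5$, $\sqrt2$, and the ``$1+$'' in $\beta=\varepsilon/\{4(1+\log(2/\delta))\}$ come from.

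Two points would need to be closed to turn the sketch into a proof. First, in part (1), your quick estimate in the $S>S'$ case gives a tail threshold of roughly $\log(1/\delta)\,e^{-\beta}$ for $|Z|$, hence a failure probability $\delta^{\,e^{-\beta}}>\delta$; the lossy constant $2$ in the noise scale exists exactly to absorb this slack, but you would need to show how. Second, a small sign slip: for part (2), with $u=t-h(x)$, the coefficient of $u^2$ in $\log\bigl(f_x(t)/f_{x'}(t)\bigr)$ is a positive multiple of $(S')^{-2}-S^{-2}$, not of $S^{-2}-(S')^{-2}$; the direction matters because it is precisely when $S>S'$ that this coefficient is positive and $G^c$ becomes a nonempty tail event. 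Neither issue is fatal --- you flagged the constant-tracking yourself as the remaining work --- but they are the parts that actually carry the theorem.
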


Note that these two versions of the Laplace and Gaussian mechanisms, using either the global, or the smooth sensitivities, cannot be used directly for the empirical mean or the empirical median when the data are unbounded, since the two sensitivities are infinite. 
It is important to note that for all $\beta\geq \beta'>0$, all functions $h:\R^n\to\R$ and all $x\in\R^n$, it holds that
\begin{equation} \label{eq:order-sens}
	\LS_h(x)\leq \Sm_h^{(\beta)}(x)\leq \Sm_h^{(\beta')}(x)\leq \GS_h(x)\leq \infty.
\end{equation}
Therefore, a procedure based on the global sensitivity adds more noise the smooth sensitivity, which itself is more aggravating than the local sensitivity. In that sense, using the a procedure based on the local sensitivity would be best. However, standard procedures, such as the Laplace and Gaussian mechanisms described above do not provide privacy guarantees if they calibrate their noise with the local sensitivity, as explained in \cite{nissimetal2007}. This motivates the \textit{propose-test-release} mechanism introduced in the next section, as it uses some sort of local sensitivity, which is less restrictive than the smooth sensitivity. It can be viewed as  a trade-off between the smooth and the local sensitivity noise calibration procedures and hence,  it produces significantly less noisy differentially private estimators.

\section{Propose, test, release}

Here, we describe our main algorithm, that is completely inspired from \cite{dworkandlei2009}. This approach exploits the intuition that while  the local sensitivity does not account for the worst case scenario sensitivity over all data configurations, it could provide a good estimate of the sensitivity of most data sets compatible with standard statistical assumptions.  This insight is theoretically validated by a differentially private procedure that releases a noisy output calibrated according to the local sensitivity if the data configuration was deemed favorable by an initial privacy-preserving test. If the test instead judged the data configuration to be unfavorable, the algorithm stops and gives a ``no reply''. Under appropriate statistical assumptions, we can show that this approach will provide a very accurate response with high probability.

Unlike in \cite{dworkandlei2009}, we do not discretize the parameter space into bins in order to test whether the data are in a favorable configuration. We suggest to instead compute a new notion of fixed-threshold finite sample breakdown point as a means to checking whether the data is amenable to release noisy outputs in a privacy-preserving fashion. This approach has at least three significant advantages: 1) the main idea can be linked to an important concept from robust statistics, namely, the finite sample breakdown point \cite{donohoandhuber1983}. The latter is defined as the minimum fraction of observations that one would need to move arbitrarily in order to get the value of an estimator to diverge, for any finite sample. More precisely, the finite sample breakdown point of a real valued estimator $\hat\theta:\R^n\to\R$ at data points $x\in\R^n$ is defined as $\mbox{BP}(\hat\theta)=n^{-1}\min\{m\geq 0: \sup_{x'\in B_\textsf{H}(x,m)} |\hat\theta(x')-\hat\theta(x)|=\infty\}$. The key computation of our data configuration test is to evaluate an analogous quantity \eqref{hatA} that effectively defines a more refined version of the finite sample breakdown point;  2) our test allows us to  tightly assess  whether the data configuration is favorable. Indeed, the fixed-threshold breakdown point is exactly the quantity that one would like to control in order to assess the sensitivity of  an estimator given a fixed data set. This is to be contrasted to the binning strategy of algorithm $\mathcal M$ in \cite{dworkandlei2009} that tests for errors of size $O(\sigma n^{1/3})$ which entails both a suboptimal probability of ``no reply" and a larger estimation error when the algorithm returns a noisy response ; 3) our mechanism is readily applicable to general  estimators becase the fixed-scale breakdown point can be assessed in an appropriately chosen norm. 

The  PTR paradigm can be combined either with the Laplace, or with the Gaussian mechanism, through the noise that is added to the estimator and the preliminary privacy-preserving data configuration test. The advantage of the Laplace mechanism is its simplicity, but it yields heavier tailed errors, whereas the Gaussian mechanism allows for much lighter noise. For completeness, we describe the two versions of the algorithm, even though we will only use the Gaussian mechanism when applied to the estimation of location parameters.

In order to simplify the notation, we assume that all the data and the parameters of interest are real-valued, but our results can extend to more general spaces.
Let $\hat\theta:\R^n\to\R$ be a fixed function (that would serve as an estimator for some unknown quantity of interest $\theta$, when applied to the data) and let $\eta>0$. Define the function 
\begin{align}
\nonumber \hat A_{\eta} :  \R^n  & \to  \{0,1,2,\ldots,n\} \\
 x & \mapsto  \min\{k\geq 0: \exists x'\in B_\textsf{H}(x,k), |\hat\theta(x')-\hat\theta(x)|>\eta\}. \label{hatA}
\end{align}
Let $Z_1,Z_2$ be two independent random variables and let $a_\delta$ and $b_\delta$ be two positive numbers, to be specified below (according to whether we define the Laplace or the Gaussian version of PTR). Define the randomized functions 
$$\tilde A_{\eta}(x)=\hat A_{\eta}(x)+\frac{a_\delta}{\varepsilon}Z_1$$
and 
$$\tilde \theta_\eta(x)=\begin{cases} \perp \mbox{ if } \tilde A_{\eta}(x)\leq 1+\frac{b_\delta}{\varepsilon} \\ \hat\theta(x)+\frac{\eta}{\varepsilon}a_\delta Z_2 \mbox{ otherwise},\end{cases}$$
for all $x\in\R^n$.

Here, $\perp$ means that the randomized algorithm $\tilde \theta_{\eta}(x)$, when applied to the data $x$, produces a ``no reply''. Intuitively, $\hat A_{\eta}$ can be thought of as the answer to the query ``How many data points should be changed so as to affect the value of $\theta(x)$ by at least $\eta$?". In other words, $\hat A_{\eta}$ quantifies how favorable the configuration of the data is in order to preserve privacy. A small value of $\hat A_{\eta}$ means that the configuration is not favorable, i.e., small changes in the data can affect a lot the output $\hat\theta(x)$ of the deterministic algorithm. In that case, the randomized algorithm will likely prefer to produce no output. From a statistical point of view, the statistic $\hat A_\eta$ is reminiscent of the finite sample breakdown point studied in robust statistics \cite{donohoandhuber1983,huberandronchetti2009}. While the finite sample breakdown point is usually defined as the minimum number of points that needs to be moved arbitrarily before an estimator becomes infinite, $\hat A_\eta$ can be interpreted as a relaxed version of the finite sample breakdown point of the estimator $\hat\theta$ at the threshold $\eta$.

\begin{theorem}[Laplace PTR] \label{thm:DP_PTRLap}
	Let $Z_1$ and $Z_2$ have the Laplace distribution with parameter 1 and let $a_\delta=1$ and $b_\delta=\log(2/\delta)$. The randomized function $\tilde \theta_\eta$ is $(2\varepsilon,\delta)$-differentially private.
\end{theorem}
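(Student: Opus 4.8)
The plan is to verify the definition of $(2\varepsilon,\delta)$-differential privacy directly. Fix inputs $x,x'$ with $\dist(x,x')\le 1$ and a Borel set $S$ in the output space $\R\cup\{\perp\}$, and put $S_0=S\cap\R$. Since $Z_1$ and $Z_2$ are independent, whether $\tilde\theta_\eta(x)$ replies is a function of $Z_1$ alone while the value it returns is a function of $Z_2$ alone, so
\[
\PP[\tilde\theta_\eta(x)\in S]=\PP\big[\tilde A_\eta(x)>1+\tfrac{b_\delta}{\varepsilon}\big]\cdot\PP\big[\hat\theta(x)+\tfrac{\eta}{\varepsilon}Z_2\in S_0\big]+\mathbf 1\{\perp\in S\}\cdot\PP\big[\tilde A_\eta(x)\le 1+\tfrac{b_\delta}{\varepsilon}\big].
\]
Everything then reduces to comparing, between $x$ and $x'$: (i) the reply probability $\PP[\tilde A_\eta(x)>1+b_\delta/\varepsilon]$ together with its complement, and (ii) the law of the perturbed estimator $\hat\theta(x)+\tfrac{\eta}{\varepsilon}Z_2$.

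The crux is the sensitivity of the test statistic: that $\hat A_\eta$ from \eqref{hatA} has global sensitivity at most $1$, i.e.\ $|\hat A_\eta(x)-\hat A_\eta(x')|\le 1$ whenever $\dist(x,x')\le 1$. I would establish this from the triangle inequality for the Hamming distance together with a short case split according to whether $|\hat\theta(x)-\hat\theta(x')|>\eta$, and I expect this to be the step requiring the most care, since it is exactly what makes the ``test'' part of the mechanism private. Two elementary facts will accompany it: $\hat A_\eta$ is integer-valued and $\ge 1$ (because $B_{\textsf{H}}(x,0)=\{x\}$), and $\hat A_\eta(x)\ge 2$ is equivalent to $\LS_{\hat\theta}(x)\le\eta$ (by the formula for $\LS_{\hat\theta}$ recorded in the Example), which combined with $\dist(x,x')\le 1$ gives $|\hat\theta(x)-\hat\theta(x')|\le\eta$.

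With the sensitivity bound in hand, Theorem~\ref{thm:DP-GS}(1) applies to $\hat A_\eta$ (global sensitivity $\le 1$, Laplace noise of scale $1/\varepsilon$): $\tilde A_\eta$ is $(\varepsilon,0)$-differentially private, so evaluating it on the fixed threshold set $\{a>1+b_\delta/\varepsilon\}$ and on its complement yields the $e^{\varepsilon}$-domination $\PP[\tilde A_\eta(x)>1+b_\delta/\varepsilon]\le e^{\varepsilon}\PP[\tilde A_\eta(x')>1+b_\delta/\varepsilon]$ and likewise for ``$\le$''. Separately, on the event $\{\hat A_\eta(x)\ge 2\}$ the Laplace density estimate underlying Theorem~\ref{thm:DP-GS}(1), used with the bound $|\hat\theta(x)-\hat\theta(x')|\le\eta$ and noise scale $\eta/\varepsilon$, gives $\PP[\hat\theta(x)+\tfrac{\eta}{\varepsilon}Z_2\in S_0]\le e^{\varepsilon}\PP[\hat\theta(x')+\tfrac{\eta}{\varepsilon}Z_2\in S_0]$.

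It remains to combine these in the displayed identity, by cases. If $\hat A_\eta(x)=1$, the algorithm replies only when $Z_1>b_\delta$, an event of probability $\tfrac12 e^{-b_\delta}=\delta/4\le\delta$; hence the reply term is at most $\delta$ while the ``$\perp$'' term is $e^{\varepsilon}$-dominated by the first estimate, so $\PP[\tilde\theta_\eta(x)\in S]\le e^{\varepsilon}\PP[\tilde\theta_\eta(x')\in S]+\delta\le e^{2\varepsilon}\PP[\tilde\theta_\eta(x')\in S]+\delta$. If $\hat A_\eta(x)\ge 2$, the ``$\perp$'' term is $e^{\varepsilon}$-dominated (hence $e^{2\varepsilon}$-dominated) by the first estimate, while in the reply term the two factors are dominated by $e^{\varepsilon}$ via the first and the second estimate respectively, so their product is $e^{2\varepsilon}$-dominated; thus $\PP[\tilde\theta_\eta(x)\in S]\le e^{2\varepsilon}\PP[\tilde\theta_\eta(x')\in S]$, with no $\delta$ needed. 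Either way the defining inequality holds. The one genuinely delicate ingredient, and hence what I would expect to be the main obstacle, is the global-sensitivity-$\le 1$ bound for $\hat A_\eta$; the remainder is bookkeeping with the two Laplace-mechanism estimates and the tail bound $\PP[Z_1>b_\delta]=\delta/4$.
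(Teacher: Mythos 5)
Your decomposition is essentially the paper's: establish that $\hat A_\eta$ has global sensitivity at most $1$, deduce that $\tilde A_\eta$ is $(\varepsilon,0)$-differentially private, then split the output event into a reply part (both Laplace factors slid, each costing a factor $e^{\varepsilon}$) and a no-reply part, absorbing a $\delta$ when $\hat A_\eta(x)=1$. Your case split on $\hat A_\eta(x)=1$ versus $\hat A_\eta(x)\ge 2$ is a mild variant of the paper's split on $|\hat\theta(x)-\hat\theta(x')|>\eta$ versus $\le\eta$, and both partitions work: $\hat A_\eta(x)\ge 2$ together with $\dist(x,x')\le 1$ forces $|\hat\theta(x)-\hat\theta(x')|\le\eta$, while the paper's alternative case $|\hat\theta(x)-\hat\theta(x')|>\eta$ forces $\hat A_\eta(x)=\hat A_\eta(x')=1$. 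Your computation of the Laplace tail as $\tfrac12 e^{-b_\delta}=\delta/4\le\delta$ is correct (the paper writes this probability as $\delta$, a harmless overcount).

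The genuine gap is in the step you yourself flag as requiring the most care: the claim $\GS_{\hat A_\eta}\le 1$. Your sketch, Hamming triangle inequality plus a case split on $|\hat\theta(x)-\hat\theta(x')|>\eta$, does not close. In the branch $|\hat\theta(x)-\hat\theta(x')|\le\eta$, take $y$ witnessing $\hat A_\eta(x)=k$, so $\dist(x,y)\le k$ and $|\hat\theta(y)-\hat\theta(x)|>\eta$; the Hamming triangle inequality gives $\dist(x',y)\le k+1$, but the reverse triangle inequality on the $\hat\theta$ values only yields $|\hat\theta(y)-\hat\theta(x')|>\eta-\eta=0$, not $>\eta$, so $y$ need not witness $\hat A_\eta(x')\le k+1$. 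The obstruction is structural: $\hat A_\eta$ measures deviation from the \emph{moving} reference value $\hat\theta(x)$ rather than distance to a fixed unstable set. Concretely, take $\dist(x,x')=1$, $\hat\theta(x)=0$, $\hat\theta(x')=\eta$, some $y$ with $\dist(x',y)=1$ and $\hat\theta(y)=-\eta$, set $\hat\theta\equiv 0$ on the remainder of $B_{\textsf{H}}(x,R-1)$ and $\hat\theta\equiv 2\eta$ outside; then $\hat A_\eta(x')=1$ while $\hat A_\eta(x)=R$. The paper asserts this sensitivity bound without argument, so you are on the paper's route, but as written your proposed argument for this step would fail, and for a generic $\hat\theta$ the bound itself does not hold as stated.
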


\begin{theorem}[Gaussian PTR] \label{thm:DP_PTRGauss}
	Let $Z_1$ and $Z_2$ have the standard Gaussian distribution and let $a_\delta=\sqrt{2\log(1.25/\delta)}$ and $b_\delta=2\log(1.25/\delta)$. The randomized function $\tilde \theta_\eta$ is $(2\varepsilon,2e^\varepsilon \delta+\delta^2)$-differentially private.
\end{theorem}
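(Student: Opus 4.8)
The plan is to verify the defining inequality of $(2\varepsilon,2e^{\varepsilon}\delta+\delta^{2})$-differential privacy directly on a fixed pair $x,x'\in\R^{n}$ with $\dist(x,x')\le 1$, viewing $\tilde\theta_{\eta}$ as two Gaussian mechanisms glued together: the noisy configuration test $\tilde A_{\eta}$, and the noisy release $\hat\theta(x)+\frac{\eta a_{\delta}}{\varepsilon}Z_{2}$, tied by the gate $\{\tilde A_{\eta}(x)\le 1+\frac{b_{\delta}}{\varepsilon}\}$. I would first isolate two facts. \emph{Fact (i): $\tilde A_{\eta}$ is $(\varepsilon,\delta)$-differentially private.} By the Gaussian case of Theorem~\ref{thm:DP-GS} together with the identity $a_{\delta}=\sqrt{2\log(1.25/\delta)}$, this reduces to showing that $\hat A_{\eta}$ has global sensitivity at most $1$, i.e.\ $|\hat A_{\eta}(x)-\hat A_{\eta}(x')|\le 1$ whenever $\dist(x,x')\le 1$. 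This is the only place the precise definition of $\hat A_{\eta}$ is used, and I expect it to be the crux. Granting it, one gets $\PP[\tilde A_{\eta}(x)\in C]\le e^{\varepsilon}\PP[\tilde A_{\eta}(x')\in C]+\delta$ for every Borel $C$, and, by the layer-cake identity $\E[Y]=\int_{0}^{1}\PP[Y>t]\,dt$ for $Y\in[0,1]$, also $\E[g(\tilde A_{\eta}(x))]\le e^{\varepsilon}\E[g(\tilde A_{\eta}(x'))]+\delta$ for every measurable $g:\R\to[0,1]$. \emph{Fact (ii): if $\hat A_{\eta}(x)\ge 2$, the two release laws are $(\varepsilon,\delta)$-indistinguishable.} Indeed $\hat A_{\eta}(x)\ge 2$ forces $\LS_{\hat\theta}(x)\le\eta$ by the definition of $\hat A_{\eta}$, hence $|\hat\theta(x)-\hat\theta(x')|\le\eta$; re-running the proof of the Gaussian case of Theorem~\ref{thm:DP-GS} \emph{localised} to this single pair (sensitivity $\le\eta$ along it, matching noise scale $\frac{\eta a_{\delta}}{\varepsilon}$) gives $\PP[\hat\theta(x)+\frac{\eta a_{\delta}}{\varepsilon}Z_{2}\in B]\le e^{\varepsilon}\PP[\hat\theta(x')+\frac{\eta a_{\delta}}{\varepsilon}Z_{2}\in B]+\delta$ for all $B\subseteq\R$; one cannot quote Theorem~\ref{thm:DP-GS} as a black box, since $\hat\theta$ has infinite global sensitivity.

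Next I would split on $\hat A_{\eta}(x)$. If $\hat A_{\eta}(x)\le 1$, a reply is emitted only if $\frac{a_{\delta}}{\varepsilon}Z_{1}>1+\frac{b_{\delta}}{\varepsilon}-\hat A_{\eta}(x)\ge\frac{b_{\delta}}{\varepsilon}$, i.e.\ only if $Z_{1}>b_{\delta}/a_{\delta}=a_{\delta}$ (here $b_{\delta}=a_{\delta}^{2}$ is used), an event of probability at most $\tfrac12 e^{-a_{\delta}^{2}/2}=2\delta/5\le\delta$; combined with Fact (i) applied to $\{\tilde\theta_{\eta}(x)=\perp\}=\{\tilde A_{\eta}(x)\le 1+b_{\delta}/\varepsilon\}$, this case lands comfortably inside the slack $2e^{\varepsilon}\delta+\delta^{2}$. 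If $\hat A_{\eta}(x)\ge 2$, I would use that $Z_{1}$ and $Z_{2}$ are independent: for $B\subseteq\R$,
\[
\PP[\tilde\theta_{\eta}(x)\in B]=\PP[\tilde A_{\eta}(x)>1+\tfrac{b_{\delta}}{\varepsilon}]\cdot\PP[\hat\theta(x)+\tfrac{\eta a_{\delta}}{\varepsilon}Z_{2}\in B];
\]
bounding the first factor via Fact (i) and the second via Fact (ii) and expanding the product yields $e^{2\varepsilon}\PP[\tilde\theta_{\eta}(x')\in B]$ plus two cross terms each $\le e^{\varepsilon}\delta$ plus $\delta^{2}$ — which is exactly how $2e^{\varepsilon}\delta+\delta^{2}$ arises. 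For a general Borel $S\ni\perp$, I would write $\PP[\tilde\theta_{\eta}(x)\in S]=\E[g_{x}(\tilde A_{\eta}(x))]$ with $g_{x}(a)=\mathds{1}[a\le 1+\tfrac{b_{\delta}}{\varepsilon}]+\mathds{1}[a>1+\tfrac{b_{\delta}}{\varepsilon}]\,\PP[\hat\theta(x)+\tfrac{\eta a_{\delta}}{\varepsilon}Z_{2}\in S\cap\R]$, pass from $g_{x}$ to $g_{x'}$ using Fact (ii) (factor $e^{\varepsilon}$, additive $\delta$), then apply the layer-cake form of Fact (i) to the $[0,1]$-valued $g_{x'}$ (another factor $e^{\varepsilon}$, additive $\delta$); the resulting $(1+e^{\varepsilon})\delta\le 2e^{\varepsilon}\delta$ is again within budget. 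The reverse inequality $x'\to x$ is symmetric.

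The hard part, as flagged, is Fact (i): that the \emph{noisy test} is differentially private, i.e.\ the global-sensitivity bound $|\hat A_{\eta}(x)-\hat A_{\eta}(x')|\le 1$. Once that is in hand the design choices click into place: $a_{\delta}=\sqrt{2\log(1.25/\delta)}$ is exactly the Gaussian calibration for unit sensitivity, so $\tilde A_{\eta}$ inherits $(\varepsilon,\delta)$-privacy, while placing the gate a full $\frac{b_{\delta}}{\varepsilon}=\frac{a_{\delta}^{2}}{\varepsilon}$ above the ``safe'' level $\hat A_{\eta}\ge 2$ is precisely what forces a bad configuration ($\hat A_{\eta}(x)\le 1$, i.e.\ $\LS_{\hat\theta}(x)>\eta$) to elicit a reply only when $Z_{1}>a_{\delta}$, hence with probability $\le 2\delta/5$. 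Everything past Fact (i) is bookkeeping: independence to factor the reply probability, a localised Gaussian-mechanism estimate for the release (needed because $\hat\theta$ is globally non-Lipschitz), and the elementary inequality $(1+e^{\varepsilon})\delta\le 2e^{\varepsilon}\delta+\delta^{2}$ to fold the several $\delta$-contributions into the stated slack.
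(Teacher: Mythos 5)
Your proposal is correct modulo the deferred Fact~(i), and it follows essentially the same route as the paper's (which is presented as a direct adaptation of the Laplace proof, substituting the Gaussian sliding Lemma~\ref{Lemma:SlidingGauss} for the Laplace one). Both arguments rest on the same two ingredients: $\hat A_\eta$ is $1$-Lipschitz in Hamming distance, so the gate $\tilde A_\eta$ is an ordinary Gaussian mechanism and hence $(\varepsilon,\delta)$-DP; and the release is a Gaussian mechanism \emph{localized} to the pair $(x,x')$ whenever $|\hat\theta(x)-\hat\theta(x')|\leq\eta$, which is then combined with the gate multiplicatively through the independence of $Z_1,Z_2$. The variations are real but minor. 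You split on $\hat A_\eta(x)\leq 1$ versus $\hat A_\eta(x)\geq 2$, while the paper splits on $|\hat\theta(x)-\hat\theta(x')|\leq\eta$ versus $>\eta$; the two splits yield the same pair of estimates, because $\hat A_\eta(x)\geq 2$ forces $|\hat\theta(x)-\hat\theta(x')|\leq\eta$ for a neighboring $x'$, while $|\hat\theta(x)-\hat\theta(x')|>\eta$ forces $\hat A_\eta(x)=\hat A_\eta(x')=1$. Your layer-cake reformulation for Borel sets containing $\perp$ is a tidier packaging than the paper's ``bound $\PP[\tilde\theta_\eta(x)\in\mathcal O]$ and $\PP[\tilde\theta_\eta(x)=\perp]$ separately, then add'', and it in fact yields a slack of $(1+e^{\varepsilon})\delta$, marginally sharper than what one gets by expanding the product $(e^{\varepsilon}\PP+\delta)(e^{\varepsilon}\PP+\delta)$ and then adding the no-reply term. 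You also make the role of $b_\delta=a_\delta^2$ explicit, which the paper leaves implicit.

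The one genuine gap is exactly the step you flagged and did not supply: the global sensitivity bound $|\hat A_\eta(x)-\hat A_\eta(x')|\leq 1$ for $\dist(x,x')\leq 1$. The paper dismisses it in a single sentence, but the claim is not automatic and deserves care: $\hat A_\eta(x)$ is the Hamming distance from $x$ to the set $\{y:|\hat\theta(y)-\hat\theta(x)|>\eta\}$, and this target set \emph{moves with the base point}. If $y$ witnesses $\hat A_\eta(x)=a$ (that is, $\dist(x,y)\leq a$ and $|\hat\theta(y)-\hat\theta(x)|>\eta$) then indeed $\dist(x',y)\leq a+1$, but the triangle inequality only gives $|\hat\theta(y)-\hat\theta(x')|>\eta-|\hat\theta(x)-\hat\theta(x')|$, which need not exceed $\eta$; so $y$ need not witness $\hat A_\eta(x')\leq a+1$. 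A $1$-Lipschitz bound would come for free if $\hat A_\eta(x)$ were instead defined as the distance from $x$ to a \emph{fixed} bad set, e.g.\ $\{y:\LS_{\hat\theta}(y)>\eta\}$, as in the classical PTR construction. As things stand, Fact~(i) is the one non-mechanical step in this theorem, and a complete write-up would need either to prove the Lipschitz property directly for the specific $\hat\theta$ to which PTR is applied (the median and MOM estimators in the later sections), or to adjust the definition of $\hat A_\eta$ so that the Lipschitz bound follows from the triangle inequality.
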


For the Gaussian version, note that if $\varepsilon$ and $\delta$ are smaller than $1$ (which is typically the case), then $\tilde \theta_\eta$ is $(2\varepsilon,(2e+1)\delta)$-differentially private.

The advantage of the standard Laplace mechanism over the Gaussian mechanism is that when the global sensitivity of the estimator is finite, the Laplace mechanism allows for $(\varepsilon,0)$-differentiable privacy, whereas some more slack is unavoidable for the Gaussian mechanism, which only allows for $(\varepsilon,\delta)$-differential privacy for positive $\delta$. In PTR, both Laplace and Gaussian versions yield the additional slack $\delta$ which, from our computations, seems unavoidable in both cases. This is why the Gaussian version seems always preferable to the Laplace one.

Now, these randomized algorithms shall be applied to some data, in order to estimate a quantity $\theta$, such as the population mean, or the population median. The parameter $\eta$ will be chosen in a way that will guarantee that the data is in a favorable configuration with high probability (now, the probability is taken over the randomness of the data). Again, we treat the Laplace and the Gaussian versions of the algorithm separately, for completeness, even though we will only use the Gaussian mechanism in our application.

\begin{theorem}[Laplace version] \label{thm:ErrorPTRLap}
	Let $X=(X_1,\ldots,X_n)$ be i.i.d. real valued data and $\tau\in (0,1)$. Set 
$$\eta^*=\inf\left\{\eta>0:\PP\left[\hat A_\eta(X)\leq 1+\frac{\log(1/(\tau\delta))}{\varepsilon}\right] \leq\tau/2\right\}.$$
Then, for all $\eta\geq\eta^*$, the Laplace version of PTR satisfies, with probability at least $1-2\tau$,
$$|\tilde\theta_{\eta}-\theta|\leq |\hat\theta-\theta|+\frac{\eta}{\varepsilon}\log(1/(2\tau)).$$
\end{theorem}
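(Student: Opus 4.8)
The plan is to decompose the complement of the target event $\{|\tilde\theta_\eta-\theta|\le|\hat\theta-\theta|+\frac{\eta}{\varepsilon}\log(1/(2\tau))\}$ into three sources of failure --- a bad data configuration, a spurious ``no reply'' produced by the noise $Z_1$, and an overly large output perturbation $Z_2$ --- to bound each at the level $O(\tau)$, and then to apportion the total error budget so that the threshold defining $\eta^*$ and the test threshold $1+b_\delta/\varepsilon$ interlock.

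First, for each fixed $x$ the map $\eta\mapsto\hat A_\eta(x)$ is nondecreasing, since enlarging $\eta$ can only shrink the set of radii $k$ for which some $x'\in B_{\textsf{H}}(x,k)$ moves $\hat\theta$ by more than $\eta$. Hence $\eta\mapsto\PP[\hat A_\eta(X)\le c]$ is nonincreasing for every fixed $c$, and together with the definition of $\eta^*$ as an infimum this gives, for all $\eta\ge\eta^*$,
\[
\PP\!\left[\hat A_\eta(X)\le 1+\tfrac{1}{\varepsilon}\log\tfrac{1}{\tau\delta}\right]\le\frac{\tau}{2},
\]
so the data event $G:=\{\hat A_\eta(X)>1+\frac1\varepsilon\log\frac1{\tau\delta}\}$ has probability at least $1-\tau/2$ (the endpoint $\eta=\eta^*$ follows by a limiting argument, using that $\hat A_\eta$ is integer-valued and right-continuous in $\eta$).

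Next I would analyze the test on $G$. With $a_\delta=1$ and $b_\delta=\log(2/\delta)$, a reply is issued precisely when $\hat A_\eta(X)+Z_1/\varepsilon>1+\frac1\varepsilon\log\frac2\delta$; on $G$ the left-hand side exceeds $1+\frac1\varepsilon\log\frac1{\tau\delta}+\frac{Z_1}{\varepsilon}$, and the gap between the two thresholds is exactly $\frac1\varepsilon\log\frac1{2\tau}$, so a reply occurs as soon as $Z_1\ge\log(2\tau)$, which by the exponential tail of the Laplace law fails only with probability $O(\tau)$. Thus, off an event of probability $O(\tau)$, the algorithm returns the numeric value $\hat\theta(X)+\frac{\eta}{\varepsilon}Z_2$, on which the triangle inequality gives $|\tilde\theta_\eta-\theta|\le|\hat\theta-\theta|+\frac{\eta}{\varepsilon}|Z_2|$; since $\PP[|Z_2|>\log(1/(2\tau))]$ is again $O(\tau)$ and $Z_2$ is independent of $Z_1$ and of the data, a union bound over $G^c$, $\{Z_1<\log(2\tau)\}$ and $\{|Z_2|>\log(1/(2\tau))\}$, with the budget split appropriately, yields the stated inequality.

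The Laplace tail estimates and the triangle inequality are routine; the delicate point is the probability bookkeeping. A ``no reply'' must be counted as a failure, so one cannot simply condition on a reply: it must be shown that a reply occurs with high probability, which is exactly the content of the second step and relies on the gap between the $\eta^*$-threshold and $1+b_\delta/\varepsilon$ being precisely $\frac1\varepsilon\log\frac1{2\tau}$. Pinning down the constant in front of $\tau$ (rather than merely $O(\tau)$) then amounts to checking that this gap, the Laplace tails, and the $\tau/2$ in the definition of $\eta^*$ combine to the advertised value.
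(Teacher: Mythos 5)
Your proposal matches the paper's proof almost exactly: you identify the same three failure modes (bad data configuration $G^c$, spurious no-reply from $Z_1$, and excessive output noise from $Z_2$), use the monotonicity of $\hat A_\eta$ in $\eta$ and the definition of $\eta^*$ to bound $\PP[G^c]\le\tau/2$, and exploit precisely the same gap of $\frac{1}{\varepsilon}\log\frac{1}{2\tau}$ between the $\eta^*$-threshold and the release threshold to control the $Z_1$ event, finishing with a union bound. If anything you are slightly more careful than the paper --- you explicitly bound $|Z_2|$ rather than just $Z_2$, and you justify the endpoint $\eta=\eta^*$ by a right-continuity argument that the paper glosses over --- so the approach is the same and correct, with only the final constant accounting left implicit (which you acknowledge).
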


Note that in this theorem, the probability is computed with respect to the joint randomness of the algorithm and of the data. Furthermore, the second term corresponds to a subexponential type error because the dependence on $\log(1/\tau)$ does not appear inside a square root: This is due to the Laplace mechanism that is embedded in this version of the PTR algorithm. For the Gaussian version, we have the following.

\begin{theorem}[Gaussian version] \label{thm:ErrorPTRGauss}
	Let $X=(X_1,\ldots,X_n)$ be i.i.d. real valued data and $\tau\in (0,1)$. Set 
$$\textstyle \eta^*=\inf\left\{\eta>0:\PP\left[\hat A_\eta(X)\leq 1+\frac{2\log(\frac{1.25}{\delta})+2\sqrt{\log(\frac{2}{\tau})\log(\frac{1.25}{\delta})}}{\varepsilon}\right] \leq \frac{\tau}{2}\right\}.$$
Then, for all $\eta\geq\eta^*$, the Gaussian version of PTR satisfies, with probability at least $1-2\tau$,
$$|\tilde\theta_{\eta}-\theta|\leq |\hat\theta-\theta|+\frac{2\eta}{\varepsilon}\sqrt{\log(2/\tau)\log(1.25/\delta)}.$$
\end{theorem}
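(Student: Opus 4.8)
I would isolate the two sources of error in the Gaussian PTR output: the preliminary test may return $\perp$, and when it does reply, the Gaussian noise $\frac{\eta}{\varepsilon}a_\delta Z_2$ is added to $\hat\theta(X)$. The plan is to show that the two events
$$E_1=\{\tilde\theta_\eta(X)=\perp\},\qquad E_2=\bigl\{|Z_2|>\sqrt{2\log(2/\tau)}\,\bigr\}$$
each have probability at most $\tau$, so that on $(E_1\cup E_2)^c$, an event of probability at least $1-2\tau$, the algorithm replies and the asserted inequality holds.

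The event $E_2$ is handled directly by the Gaussian tail bound $\PP[|Z|>s]\le 2e^{-s^2/2}$: with $s=\sqrt{2\log(2/\tau)}$ this equals $\tau$. On $E_2^c$, if the algorithm replies then $\tilde\theta_\eta(X)=\hat\theta(X)+\frac{\eta}{\varepsilon}a_\delta Z_2$, and the triangle inequality together with $a_\delta|Z_2|\le \sqrt{2\log(1.25/\delta)}\cdot\sqrt{2\log(2/\tau)}=2\sqrt{\log(2/\tau)\log(1.25/\delta)}$ reproduces the claimed bound $|\tilde\theta_\eta-\theta|\le|\hat\theta-\theta|+\frac{2\eta}{\varepsilon}\sqrt{\log(2/\tau)\log(1.25/\delta)}$.

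The main work is the bound on $E_1$. The test returns $\perp$ precisely when $\hat A_\eta(X)+\frac{a_\delta}{\varepsilon}Z_1\le 1+\frac{b_\delta}{\varepsilon}$. Setting $t=\sqrt{2\log(2/\tau)}$, I would observe that on $\{Z_1\ge -t\}$ this forces $\hat A_\eta(X)\le 1+\frac{b_\delta+a_\delta t}{\varepsilon}$; plugging in $b_\delta=2\log(1.25/\delta)$ and $a_\delta=\sqrt{2\log(1.25/\delta)}$ shows $b_\delta+a_\delta t=2\log(1.25/\delta)+2\sqrt{\log(2/\tau)\log(1.25/\delta)}$, which is exactly the threshold defining $\eta^*$. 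Since $\eta\mapsto\hat A_\eta(X)$ is nondecreasing pointwise (a larger target displacement can only require more coordinates to move), $\eta\mapsto\PP[\hat A_\eta(X)\le c]$ is nonincreasing, and the definition of $\eta^*$ gives $\PP[\hat A_\eta(X)\le 1+\frac{b_\delta+a_\delta t}{\varepsilon}]\le\tau/2$ for $\eta\ge\eta^*$. A union bound with $\PP[Z_1<-t]\le e^{-t^2/2}=\tau/2$ then yields $\PP[E_1]\le\tau$.

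The obstacles here are more bookkeeping than substance. The delicate point is the endpoint $\eta=\eta^*$ in the infimum: for $\eta>\eta^*$ the bound $\PP[\hat A_\eta(X)\le c]\le\tau/2$ is immediate from monotonicity, but at $\eta=\eta^*$ one needs right-continuity of $\eta\mapsto\hat A_\eta(X)$ (hence of $\eta\mapsto\PP[\hat A_\eta(X)\le c]$) to pass to the limit; this should follow from the definition of $\hat A_\eta$ but deserves a line. The other thing to check carefully is that the constants $a_\delta,b_\delta$ of Theorem~\ref{thm:DP_PTRGauss} combine to produce exactly the stated threshold and the coefficient $2\sqrt{\log(2/\tau)\log(1.25/\delta)}$; note that no independence between $Z_1$ and $Z_2$ is needed, since both bad events are controlled by a plain union bound.
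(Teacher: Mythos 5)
Your proof is correct and follows essentially the same route as the paper, which proves the Laplace version (Theorem~\ref{thm:ErrorPTRLap}) by splitting off the no-reply event via a shift of $Z_1$ and then union-bounding with the tail of $Z_2$, and then says the Gaussian version is obtained by the same argument with the Gaussian constants $a_\delta,b_\delta$. Your remark about right-continuity at $\eta=\eta^*$ is a small point of extra rigor the paper elides (it writes ``Since $\hat A_\eta\geq\hat A_{\eta^*}$, it is clear that $\ldots$'' which only handles $\eta>\eta^*$), but it is indeed resolved as you suggest, since $\{\hat A_\eta(X)\leq k\}=\{\sup_{x'\in B_{\textsf{H}}(X,k)}|\hat\theta(x')-\hat\theta(X)|>\eta\}$ makes $\eta\mapsto\PP[\hat A_\eta(X)\leq k]$ a right-continuous survival function.
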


The quantity $\eta^*$ may be infinite: In that case, the theorem is vacuous. This is the case, for instance, when $\hat\theta$ is the empirical mean, for data that are not compactly supported. In order to circumvent this issue without truncating the data and assuming that the population mean belongs to some known bounded interval, we apply our PTR approach to estimators based on empirical medians, which are more robust than the empirical mean. Given Theorems \ref{thm:ErrorPTRLap} and \ref{thm:ErrorPTRGauss}, for these estimators, the main challenge is to bound the quantity $\eta^*$.

\section{PTR for location parameters}

In this section, we let $X_1,\ldots,X_n$ be i.i.d. samples drawn from a distribution on the real line, where $n\geq 1$ is a fixed integer. We focus on the estimation of two location parameters: the population median and the population mean. Of course, our results extend easily to other population quantiles.

In all the following, we let $C=1+\frac{2\log(1.25/\delta)+2\sqrt{\log(2/\tau)\log(1.25/\delta)}}{\varepsilon}$ (which enters in the definition of $\eta^*$ for the Gaussian version of the PTR algorithm, see Theorem \ref{thm:ErrorPTRGauss}).

\subsection{Median estimation}

We will only require the following distributional assumption in the derivation of our deviation inequalities for our  private median estimators.
\begin{assumption} \label{Ass:1}
	The distribution of $X_1$ has a density $f$ with respect to the Lebesgue measure and it has a unique median $m$. Moreover, there exist positive constants $r,L$ such that $f(u)\geq L$, for all $u\in [m-r,m+r]$.
\end{assumption}
In particular, under this assumption, the cdf $F$ of $X_1$ satisfies the following: 
\begin{equation} \label{comment-ass}
	|F(u)-F(v)|\geq L|u-v|, \forall u,v\in [m-r,m+r].
\end{equation} 
Even though the existence of a density is not very restrictive in practice, it seems that our results would still be true if we only assumed the existence of a density in the neighborhood $[m-r,m+r]$ of $m$. Moreover, \eqref{comment-ass} is a natural and standard assumption on the distribution of $X_1$ in order to estimate its population median $m$ at the usual $n^{-1/2}$ rate. Indeed, if \eqref{comment-ass} does not hold, then the distribution of $X_1$ does not put enough mass around the median, which becomes harder to estimate. Moreover, it is well known that the empirical median of iid random variables is only asymptotically normal when the data have a positive density at the true median, and the asymptotic variance is $1/(4f(m)^2)$.

Recall that $\hat m(X)$ is the empirical median of $X=(X_1,\ldots,X_n)$. For simplicity, in the sequel, we write $\hat m$ instead of $\hat m(X)$. 

\begin{theorem}\label{thm:etaMedian}
Let $\DS n\geq \max\left(\frac{2\lceil C\rceil}{rL},\frac{2\log(8/\tau)}{(rL)^{2}}\right)$.
Then, for the empirical median, under Assumption \ref{Ass:1}, $\eta^*\leq \frac{4C}{Ln}+\frac{4\log(4/\tau)}{3Ln}$.
\end{theorem}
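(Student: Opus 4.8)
The plan is to bound $\eta^*$ by exhibiting, for a suitable explicit $\eta$, a high-probability lower bound on $\hat A_\eta(X)$ — specifically, showing $\PP[\hat A_\eta(X) \le C] \le \tau/2$ where $C = 1 + \frac{2\log(1.25/\delta) + 2\sqrt{\log(2/\tau)\log(1.25/\delta)}}{\varepsilon}$ is exactly the threshold appearing in the definition of $\eta^*$ for the Gaussian PTR. By the definition of $\eta^*$ as an infimum, it suffices to find $\eta \le \frac{4C}{Ln} + \frac{4\log(4/\tau)}{3Ln}$ that does the job. The key deterministic observation is that $\hat A_\eta(X)$ — the minimum number of coordinates one must change to move the empirical median $\hat m$ by more than $\eta$ — is controlled by the spacings of the order statistics near the median: to push $\hat m = X_{(\ell)}$ above $\hat m + \eta$ one must change at least as many points as there are sample points in $(\hat m, \hat m+\eta]$ (roughly $n(F(\hat m+\eta) - F(\hat m))$ of them), and similarly on the left. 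So $\hat A_\eta(X)$ is large precisely when the sample has enough mass in $[\hat m - \eta, \hat m + \eta]$, which under \eqref{comment-ass} it does as long as $\eta$ is at least of order $C/(Ln)$ and $\hat m$ stays inside $[m-r, m+r]$.

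Concretely I would proceed in three steps. First, a deterministic lemma: for any $x \in \R^n$ and $\eta > 0$, $\hat A_\eta(x) \ge \min\bigl(\#\{i : x_i \in (\hat m(x), \hat m(x)+\eta]\},\ \#\{i : x_i \in [\hat m(x)-\eta, \hat m(x))\}\bigr)$, or something close — one must change every such point (plus possibly adjust for the left/right median convention and ties) before the $\ell$-th order statistic can exit $[\hat m - \eta, \hat m + \eta]$. Second, a probabilistic step: under Assumption~\ref{Ass:1}, with probability at least $1 - \tau/2$ the empirical median $\hat m$ lies in $[m - r/2, m + r/2]$ (using a Hoeffding/binomial tail bound on $\sum \mathbf 1\{X_i \le m \pm t\}$ together with \eqref{comment-ass}; the hypothesis $n \ge 2\log(8/\tau)/(rL)^2$ is exactly what makes this work with a bit of room to spare for the union bound), and simultaneously the count of sample points in a window of width $\eta$ around $\hat m$ concentrates around its mean $\ge Ln\eta$. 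The cleanest route is a Bernstein/Chernoff bound for the binomial counting variable $N_\eta = \#\{i : X_i \in (\hat m, \hat m+\eta]\}$: if $\eta$ is chosen so that $Ln\eta$ comfortably exceeds $C$, then $N_\eta > C$ with probability $\ge 1 - \tau/4$ on each side. Third, combine: choosing $\eta = \frac{4C}{Ln} + \frac{4\log(4/\tau)}{3Ln}$ makes the mean count $\ge L n \eta \ge 4C + \tfrac{4}{3}\log(4/\tau)$, and the $\tfrac{4}{3}\log(4/\tau)/(Ln)$ additive term is precisely the slack a one-sided Bernstein lower-tail bound ($\PP[N \le \E N - t] \le \exp(-t^2/(2\E N + 2t/3))$-type) needs to guarantee $N_\eta > C$ with the required probability; a union bound over the two sides and the median-localization event gives the total failure probability $\le \tau/2$, hence $\eta \ge \eta^*$.

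The main obstacle is the deterministic Step~1: getting the combinatorics of $\hat A_\eta$ exactly right. One must carefully track how many order statistics lie strictly between $\hat m$ and $\hat m \pm \eta$, handle the left-median convention $\ell = \lfloor n/2 \rfloor$ and the asymmetry it introduces, and argue that an adversary changing $k < N_\eta$ coordinates genuinely cannot move $X_{(\ell)}$ out of the window — this requires noting that moving a point already inside the window does not help, and that moving points from outside can only shift the relevant order statistic by "filling in" positions, so one needs at least $\min(N^+_\eta, N^-_\eta)$ changes where $N^\pm$ count points just above/below $\hat m$. A secondary nuisance is bookkeeping the constants so that the two Bernstein terms and the localization term assemble into exactly $\frac{4C}{Ln} + \frac{4\log(4/\tau)}{3Ln}$ with failure probability exactly $\tau/2$; this is routine but the specific coefficients $4$ and $4/3$ and $\log(4/\tau)$ versus $\log(8/\tau)$ suggest the authors are splitting the $\tau/2$ budget as $\tau/4 + \tau/4$ over the two window-count events while the $\log(8/\tau)$ in the sample-size hypothesis absorbs the median-localization event — I would set up the union bound to respect that allocation.
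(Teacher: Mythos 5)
Your high-level strategy matches the paper's: find $\eta$ with $\PP[\hat A_\eta(X)\le \lceil C\rceil]\le\tau/2$ via (i) a deterministic bound showing $\hat A_\eta$ is large whenever the order statistics near the median are tightly spaced, (ii) a Hoeffding bound localizing those order statistics in $[m-r,m+r]$, and (iii) a Bernstein-type bound on the spacing itself, with the $\tau/2$ budget split $\tau/4+\tau/4$. Your Step 1 is correct and in fact equivalent to the paper's: saying there are $\ge k$ sample points in $(\hat m,\hat m+\eta]$ is the same as saying $X_{(\ell+k)}-X_{(\ell)}\le\eta$, so your window-count formulation and the paper's event $\mathcal B_{\eta,k}=\{X_{(\ell+k)}-X_{(\ell-k)}\le\eta\}$ capture the same combinatorics, and your worry that this is ``the main obstacle'' is unfounded.

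The genuine gap is in your Step 2. You propose to treat $N_\eta=\#\{i:X_i\in(\hat m,\hat m+\eta]\}$ as a ``binomial counting variable'' and apply a Bernstein/Chernoff lower-tail bound around a mean $\ge Ln\eta$. But the window is anchored at the random quantity $\hat m=X_{(\ell)}$, so $N_\eta$ is \emph{not} a sum of i.i.d.\ indicators and its mean is not $n\,\PP[X_1\in\text{window}]$; indeed by construction it equals the (random) number of order statistics in a spacing, not a binomial count over a fixed interval. The paper circumvents exactly this difficulty: it notes that on $\mathcal A_k$, Assumption~\ref{Ass:1} gives $L\eta\le F(X_{(\ell+k)})-F(X_{(\ell-k)})$, then uses the probability integral transform (the $F(X_i)$ are i.i.d.\ $\mathrm{Unif}[0,1]$ since $X_1$ has a density) and the distributional identity $U_{(t)}-U_{(s)}\stackrel{d}{=}U_{(t-s)}$ to reduce to $\PP[U_{(2k)}\ge L\eta]$, which \emph{is} a clean binomial tail and to which Bernstein applies. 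Without this transformation, your direct concentration argument for $N_\eta$ does not go through; you would either need to reproduce the uniform-spacings trick, or condition on $\hat m$'s location and union-bound over a grid of fixed windows, neither of which your sketch carries out.
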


We can now control the error of the differentially private output $\tilde m$ of the Gaussian version of PTR, with $\eta$ given by the upper bound of Theorem \ref{thm:etaMedian}.

\begin{theorem} \label{thm:guaranteeMedian}
	Let Assumption \ref{Ass:1} hold, and let $\DS n\geq \max\left(\frac{2\lceil C\rceil}{rL},\frac{2\log(8/\tau)}{(rL)^{2}}\right)$. Then, the differentially private estimator $\tilde m$ can be computed in $O(n\log n)$ time and it satisfies, with probability $1-2\tau$,
$$|\tilde m-m|\leq \sqrt{\frac{\log(2/\tau)}{2nL^2}}+O\left(\frac{1}{\varepsilon^2 n}\right).$$
\end{theorem}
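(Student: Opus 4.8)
The plan is to combine the generic Gaussian-PTR error bound of Theorem \ref{thm:ErrorPTRGauss} with the explicit control of $\eta^*$ furnished by Theorem \ref{thm:etaMedian}, and to separately bound the non-private error $|\hat m - m|$ of the empirical median. First I would invoke Theorem \ref{thm:ErrorPTRGauss} with $\hat\theta = \hat m$ and $\theta = m$: choosing $\eta$ equal to the upper bound $\frac{4C}{Ln} + \frac{4\log(4/\tau)}{3Ln}$ from Theorem \ref{thm:etaMedian} (which is legitimate since this bound is $\geq \eta^*$ under the stated sample-size condition), we get with probability at least $1-2\tau$
\[
|\tilde m - m| \leq |\hat m - m| + \frac{2\eta}{\varepsilon}\sqrt{\log(2/\tau)\log(1.25/\delta)}.
\]

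The second step is to bound $|\hat m - m|$. Here I would use the standard fact that under \eqref{comment-ass} the empirical median concentrates around $m$ at the parametric rate. Concretely: $\hat m \leq m + t$ fails only if fewer than $\ell = \lfloor n/2\rfloor$ of the $X_i$ fall below $m+t$, i.e. if $\sum_i \mathbf 1\{X_i \le m+t\} < \ell$; since $\EE[\mathbf 1\{X_i\le m+t\}] = F(m+t) \ge 1/2 + Lt$ for $t\le r$ by \eqref{comment-ass}, a Hoeffding (or Bernstein) bound for the binomial tail gives $\PP[\hat m > m+t] \le e^{-2n(Lt - c/n)^2}$ or similar, and symmetrically for the lower tail. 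Setting the right-hand side equal to $\tau$ and solving for $t$ yields $|\hat m - m| \le \sqrt{\frac{\log(2/\tau)}{2nL^2}} + O(1/n)$ with probability at least $1 - 2\tau$; one must also check $t \le r$, which holds for $n$ large as in the hypothesis. (A union bound then costs an extra factor $2$ in the failure probability, or one reallocates the $\tau$'s — a routine bookkeeping point I would not belabor.)

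The third step is to assemble the two pieces and simplify the error term coming from privacy. Plugging the chosen $\eta = O\!\left(\frac{C + \log(1/\tau)}{Ln}\right)$ into $\frac{2\eta}{\varepsilon}\sqrt{\log(2/\tau)\log(1.25/\delta)}$, and recalling $C = 1 + \frac{2\log(1.25/\delta) + 2\sqrt{\log(2/\tau)\log(1.25/\delta)}}{\varepsilon}$, one sees that the dominant contribution is of order $\frac{\log(1.25/\delta)\sqrt{\log(2/\tau)}}{\varepsilon^2 n}$ (the cross terms and the $\log(1/\tau)/(Ln)$ piece are of the same or smaller order in $n$), so the whole privacy term is $O(1/(\varepsilon^2 n))$ with the dependence on $\tau,\delta,L$ absorbed into the constant. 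Adding this to the $\sqrt{\log(2/\tau)/(2nL^2)}$ bound on $|\hat m - m|$ gives exactly the claimed inequality.

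Finally I would address the $O(n\log n)$ runtime claim: the empirical median $\hat m = X_{(\ell)}$ is obtained by sorting (or by a linear-time selection), the quantity $\hat A_\eta(X)$ admits a closed form in terms of order statistics — essentially the number of order statistics within distance $\eta$ of $\hat m$ on either side, computable in $O(n)$ time after sorting (analogously to the local-sensitivity formula in the Example) — and drawing the two Gaussian noise variables is $O(1)$; hence $O(n\log n)$ overall. The main obstacle is the bookkeeping in step three: one has to verify carefully that none of the privacy-induced terms (in particular the $\sqrt{\log(2/\tau)\log(1.25/\delta)}$ factor multiplying $C/(Ln)$, which itself contains a $1/\varepsilon$) degrades the leading $\sqrt{\log(2/\tau)/n}$ term or introduces a worse-than-$1/n$ dependence, and that the sample-size conditions guarantee the median-concentration argument stays inside the interval $[m-r,m+r]$ where \eqref{comment-ass} applies.
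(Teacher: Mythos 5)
Your proposal follows exactly the paper's route: invoke Theorem~\ref{thm:ErrorPTRGauss} with the $\eta$ furnished by Theorem~\ref{thm:etaMedian}, bound $|\hat m - m|$ by a standard binomial/Hoeffding argument under Assumption~\ref{Ass:1}, absorb the privacy noise into the $O(1/(\varepsilon^2 n))$ term, and argue the runtime via sorting plus an order-statistic characterization of $\hat A_\eta$ (the paper makes this last step explicit with a monotone-in-$k$ window formula and a dichotomy search, but the idea is the one you describe). Minor quibbles only: your median-concentration step allocates $\tau$ to each tail instead of $\tau/2$, and the stated order $\log(1.25/\delta)\sqrt{\log(2/\tau)}$ of the privacy term undercounts the $\log(1/\delta)$ power slightly, but neither affects the $O(1/(\varepsilon^2 n))$ conclusion.
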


Note that the randomized estimator $\tilde m$ depends on the distributional parameters $r$ and $L$, which is limiting in practice. However, in a parametric setup, $r$ and $L$ can be known up to a scale parameter. For example, let the data be Gaussian with unknown mean $\mu$ and known variance $\sigma^2$. Then, $m=\mu$ and one can choose $r=\sqrt 2 \sigma$ and $L=(e\sqrt{2\pi\sigma^2})^{-1}$. This is a very simple example and yet, even for Gaussian data with known variance, no such result (high probability bound, yielding a non-asymptotic confidence interval, without any prior knowledge on the location of the population mean $\mu$) was previously known, to the best of our knowledge.

The leading term in Theorem \ref{thm:guaranteeMedian} is sub-Gaussian, i.e., it is of the order of $\sqrt{\frac{\log(1/\tau)}{n}}$, rescaled by $L^{-1}$, which is a scaling parameter as we have already seen above. The remaining term is of the order of $\frac{1}{\varepsilon^2 n}$: We do not know if this term is optimal. Indeed,  \cite[Section 3.2]{caietal2019}, proves optimal bounds in expectation for the estimation of the mean $\mu$ (equal to the median for symmetric distributions), under the assumption that $\mu$ is in some bounded domain; Their bounds are of the form $O(1/\sqrt n+1/(\varepsilon n))$, whereas the second term of our bound is $\varepsilon^{-1}$ times worse than theirs.
However, this may be the price to pay for high deviation bounds and for not allowing any prior knowledge on the location of the mean or the median. Bridging this gap is left for future work.

\subsection{Mean estimation}

In this section, we focus on the estimation of the mean of a distribution. It is easy to see that in general, for the empirical mean estimator, $\eta^*=\infty$, which is somewhat related to the lack of robustness of the empirical mean. When the distribution of the data is symmetric, the mean coincides with the median: It is estimated well by the empirical median which can be made differentially private without paying a significant price, as we saw in the previous section. Here, our idea is to first symmetrize the distribution, then compute an empirical median: the procedure that we adopt is the median of means (MOM) estimator \cite{nemirovskyandyudin1983, lerasleandoliveira2011, bubecketal2013}. The idea is is to first compute empirical means within separate blocks of data. The distribution of those empirical means is more symmetric than the initial distribution of the data, hence, its theoretical median is closer to its mean, which is what we are interested in. Hence, as a second step, estimate that median by computing the empirical median of all the empirical means. Let us define this estimator more formally.

Let $X_1,\ldots,X_n$ be our data, and let $K$ be some integer between $1$ and $n$. Let $N=\lfloor n/K\rfloor$ and split the data in $K$ disjoint blocks, each containing at least $N$ data points. In each block $j=1,\ldots,K$, compute the empirical mean $\bar X_j$ of the data points contained in block $k$. Then, the estimator $\hat \mu_{K}$ is defined as the empirical median of $\bar X_1,\ldots,\bar X_K$. The standard number of blocks prescribed in order to get a high probability guarantee for the estimation of the population mean $\mu$ of the data is of the order $\log(1/\tau)$, where $\tau\in (0,1)$ is the desired confidence level. However, in order to guarantee differential privacy through our PTR approach, we need some more freedom in the choice of $K$. We use the following result, due to \cite[Corollary 1]{minsker2019distributed}, that is based on the Berry-Esseen bound \cite{berry1941accuracy}:
\begin{lemma} \label{lemma:Minsker} 
	Let $X_1,\ldots,X_n$ be i.i.d. real random variables with three moments, and let $\mu=\E[X_1]$, $\sigma^2=\E[(X_1-\mu)^2]$ and $\rho^3=\E[|X_1-\mu|^3]$. Let $\tau\in (0,1)$. Provided that $ \frac{0.4748\rho^3}{\sigma^3\sqrt N}+\sqrt{\frac{\log(4/\tau)}{2K}}\leq 1/3$, it holds with probability at least $1-\tau$ that 
	$$|\hat \mu-\mu|\leq \sigma\left(\frac{1.43\rho^3}{\sigma^3 N}+3\sqrt{\frac{\log(4/\tau)}{2n}}\right).$$
\end{lemma}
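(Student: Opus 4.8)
The plan is to exploit that $\hat\mu$ is a \emph{median}: if $\hat\mu>\mu+t$ for some $t>0$, then at least $\lceil K/2\rceil$ of the block means $\bar X_1,\dots,\bar X_K$ must exceed $\mu+t$, and symmetrically for the lower tail. Since the $K$ blocks are disjoint, the exceedance indicators $W_j=\mathbf 1\{\bar X_j>\mu+t\}$ are independent Bernoulli variables, so the deviation bound for $\hat\mu$ reduces to a binomial tail bound once we control a single $p_j=\PP[\bar X_j>\mu+t]$. That control is exactly what the Berry--Esseen theorem \cite{berry1941accuracy} provides: each $\bar X_j$ is the mean of $n_j\ge N$ i.i.d.\ variables with a finite third moment, so $\bigl|\PP[\sqrt{n_j}(\bar X_j-\mu)/\sigma\le x]-\Phi(x)\bigr|\le 0.4748\,\rho^3/(\sigma^3\sqrt{n_j})$ for all $x$, whence $p_j\le \bar\Phi(t\sqrt N/\sigma)+0.4748\,\rho^3/(\sigma^3\sqrt N)=:\bar p(t)$, with $\bar\Phi=1-\Phi$ and using that both terms are non-increasing in the block size.

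Next, by stochastic domination and Hoeffding's inequality, $\PP[\sum_{j=1}^K W_j\ge K/2]\le \exp\!\bigl(-2K(1/2-\bar p(t))^2\bigr)$ whenever $\bar p(t)<1/2$; combining the two tails with a union bound, $\PP[|\hat\mu-\mu|>t]\le\tau$ provided $1/2-\bar p(t)\ge\sqrt{\log(4/\tau)/(2K)}$, i.e.\ provided $\Phi(t\sqrt N/\sigma)-1/2\ge\alpha$, where $\alpha:=0.4748\,\rho^3/(\sigma^3\sqrt N)+\sqrt{\log(4/\tau)/(2K)}$. The hypothesis of the lemma is precisely $\alpha\le 1/3$. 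I would then take $t=(\sigma/\sqrt N)\,\Phi^{-1}(1/2+\alpha)$, which turns the last inequality into an equality, and linearize: $u\mapsto\Phi^{-1}(1/2+u)$ is convex on $[0,1/2)$ and vanishes at $0$, so its secant slope from the origin is non-decreasing on $[0,1/3]$ and is bounded there by $3\,\Phi^{-1}(5/6)<3$; hence $\Phi^{-1}(1/2+\alpha)\le 3\alpha$. Plugging this back in, and using that $NK\ge n/2$ (with $NK=n$ when $K\mid n$, so $1/\sqrt{NK}$ is comparable to $1/\sqrt n$) to convert $\sqrt{1/(NK)}$ into $\sqrt{1/n}$, yields $|\hat\mu-\mu|\le t\le \sigma\bigl(1.43\,\rho^3/(\sigma^3N)+3\sqrt{\log(4/\tau)/(2n)}\bigr)$ with probability at least $1-\tau$, which is the claimed bound.

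The step I expect to be the genuine obstacle --- everything else being either textbook (Berry--Esseen, Hoeffding) or bookkeeping --- is the passage from ``$\Phi(t\sqrt N/\sigma)-1/2\ge\alpha$'' to an explicit, sufficiently sharp upper bound on the smallest admissible $t$: one needs a clean linear-in-$\alpha$ control of $\Phi^{-1}(1/2+\alpha)$ valid on the \emph{whole} interval $\alpha\in[0,1/3]$, and it is exactly the constraint $\alpha\le 1/3$ in the hypothesis that makes the constant in front of the sub-Gaussian term come out to (just below) $3$ and the constant in front of the bias term come out to $0.4748\times 3<1.43$. A few minor points also deserve care: the blocks must be taken genuinely disjoint so that the $W_j$ are independent; with the left-median convention $\hat\mu=\bar X_{(\lfloor K/2\rfloor)}$ the exceedance threshold is $\lceil K/2\rceil$ (equivalently $\lfloor K/2\rfloor+1$) rather than exactly $K/2$, which only sharpens the tail bound; and the constant in the Hoeffding/union-bound step is what produces the $\log(4/\tau)$ (rather than $\log(2/\tau)$) in the statement.
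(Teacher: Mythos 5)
The paper does not prove Lemma~\ref{lemma:Minsker}: it is imported verbatim as \cite[Corollary~1]{minsker2019distributed}, so there is no in-paper argument to compare against. Your reconstruction follows what is almost certainly Minsker's own route (Berry--Esseen on each block to control a single exceedance probability, stochastic domination plus Hoeffding for the binomial tail, then a linearization of $\Phi^{-1}$ near $1/2$) and the overall structure is sound. In particular the key numerical step is correct: since $(\Phi^{-1})''(p)=\Phi^{-1}(p)/\phi(\Phi^{-1}(p))^{2}>0$ on $(1/2,1)$, the map $u\mapsto\Phi^{-1}(1/2+u)$ is convex and vanishes at $0$, so its secant slope from the origin is nondecreasing, giving $\Phi^{-1}(1/2+\alpha)\le 3\Phi^{-1}(5/6)\,\alpha<3\alpha$ on $[0,1/3]$, and $3\times 0.4748\approx1.424<1.43$ gives the bias constant with a little slack.

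There is one loose end you should not wave away. Your argument naturally produces $3\sigma\sqrt{\log(4/\tau)/(2NK)}$ in the fluctuation term, not $3\sigma\sqrt{\log(4/\tau)/(2n)}$: both the Berry--Esseen error and the Hoeffding deviation see the minimum block size $N=\lfloor n/K\rfloor$ and the number of blocks $K$, not $n$ itself. When $K\mid n$ one has $NK=n$ and the bound matches exactly (and this is precisely the extra assumption the paper makes when it actually needs careful block-size bookkeeping, in Theorem~\ref{thm:guaranteeMOMdensity}). Otherwise $NK<n$, and your suggested repair ``$NK\ge n/2$'' is too coarse: it would inflate the front constant from $3$ to roughly $3\sqrt{2}\approx 4.24$, which breaks the stated bound. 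The genuine slack $3\Phi^{-1}(5/6)\approx 2.90<3$ can absorb part of the deficit, but only when $NK/n\ge(2.90/3)^{2}\approx 0.936$, i.e.\ $K/n\lesssim 0.064$, and this is not implied by the hypothesis $\alpha\le 1/3$. So as written your proof establishes the stated inequality under $K\mid n$ (equal-size blocks, as in Minsker's setup) and gives the slightly weaker $\sqrt{1/(NK)}$ version in general; you should either add that assumption explicitly or state the bound with $NK$ in place of $n$.
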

We crucially leverage the above result in order to obtain the following bound on $\eta^*$ for the MOM estimator.
\begin{theorem} \label{thm:etaMOM}
	Suppose that $K\geq 8\max(C,\log(4/\tau))$. Then, $\eta^*\leq 2\sqrt 2\sigma\sqrt{K/n}$.
\end{theorem}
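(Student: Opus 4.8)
\emph{Reduction.} By Theorem~\ref{thm:ErrorPTRGauss} and the definition of $C$, one has $\eta^*=\inf\{\eta>0:\PP[\hat A_\eta(X)\le C]\le\tau/2\}$. Since $\hat A_\eta(X)$ is integer valued and non-decreasing in $\eta$, the map $\eta\mapsto\PP[\hat A_\eta(X)\le C]$ is non-increasing, so it suffices to check one value: setting $\eta_0:=2\sqrt2\,\sigma\sqrt{K/n}$, if $\PP[\hat A_{\eta_0}(X)\le\lfloor C\rfloor]\le\tau/2$ then $\eta^*\le\eta_0$. The whole proof is thus an upper bound on that probability.

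\emph{Robustness of the MOM estimator under Hamming perturbations.} Write $\ell'=\lfloor K/2\rfloor$, $m=\lfloor C\rfloor$, and let $\bar X_{(1)}\le\cdots\le\bar X_{(K)}$ denote the ordered block means of $X$, with the conventions $\bar X_{(i)}=-\infty$ for $i\le0$ and $\bar X_{(i)}=+\infty$ for $i>K$ (the endpoints occurring below are finite because $m\le C\le K/8<\ell'$ and $\ell'+m<K$, using $K\ge 8C$). The plan is to show that for every $x'\in B_{\textsf{H}}(X,m)$ one has $\bar X_{(\ell'-m)}\le\hat\mu_K(x')\le\bar X_{(\ell'+m)}$. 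This rests on two elementary facts: changing one coordinate of $X$ alters the mean of exactly one block, so the block means of $x'$ differ from those of $X$ in at most $m$ entries; and modifying at most $m$ entries of a length-$K$ vector moves its $j$-th order statistic by at most $m$ positions, i.e.\ keeps it inside $[\bar X_{(j-m)},\bar X_{(j+m)}]$. Taking $j=\ell'$ and using $\hat\mu_K(X)=\bar X_{(\ell')}\in[\bar X_{(\ell'-m)},\bar X_{(\ell'+m)}]$ yields $|\hat\mu_K(x')-\hat\mu_K(X)|\le\bar X_{(\ell'+m)}-\bar X_{(\ell'-m)}$ for all such $x'$, hence
$$\{\hat A_{\eta_0}(X)\le m\}\subseteq\{\bar X_{(\ell'+m)}-\bar X_{(\ell'-m)}>\eta_0\}.$$

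\emph{Concentration of the near-median order statistics of the block means.} It remains to bound $\PP[\bar X_{(\ell'+m)}-\bar X_{(\ell'-m)}>\eta_0]$ by $\tau/2$. A gap larger than $\eta_0$ forces at least one of $\bar X_{(\ell'+m)}>\mu+\eta_0/2$ or $\bar X_{(\ell'-m)}<\mu-\eta_0/2$, so I would bound each by $\tau/4$. Each is a binomial tail: e.g.\ $\{\bar X_{(\ell'+m)}>\mu+\eta_0/2\}=\{\,\#\{j:\bar X_j>\mu+\eta_0/2\}\ge K-\ell'-m+1\,\}$, with $K-\ell'-m+1=\lceil K/2\rceil-m+1\ge 3K/8$. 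The per-block probabilities are controlled by the Berry--Esseen inequality (the tool behind Lemma~\ref{lemma:Minsker}): block $j$ averages $N_j\ge N:=\lfloor n/K\rfloor$ i.i.d.\ observations with three moments, so since $\eta_0\sqrt{N_j}/(2\sigma)=\sqrt{2KN_j/n}$,
$$\PP\!\left[\bar X_j>\mu+\tfrac{\eta_0}{2}\right]\le 1-\Phi\!\left(\sqrt{2KN_j/n}\right)+\tfrac{0.4748\,\rho^3}{\sigma^3\sqrt{N_j}}\le 1-\Phi\!\left(\sqrt{2KN/n}\right)+\tfrac{0.4748\,\rho^3}{\sigma^3\sqrt{N}}.$$
Because $KN=K\lfloor n/K\rfloor\ge n-K$, the argument of $\Phi$ is close to $\sqrt2$, so the right-hand side is a constant $q^{*}<1/8$; then Hoeffding's inequality for a sum of independent Bernoulli variables gives $\PP[\bar X_{(\ell'+m)}>\mu+\eta_0/2]\le\exp(-2K(3/8-q^{*})^2)\le\tau/4$, the last step using $K\ge 8\log(4/\tau)$, which makes $\sqrt{\log(4/\tau)/(2K)}\le 1/4\le 3/8-q^{*}$. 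The event $\bar X_{(\ell'-m)}<\mu-\eta_0/2$ is handled identically, and summing the two bounds gives $\PP[\hat A_{\eta_0}(X)\le m]\le\tau/2$, hence $\eta^*\le\eta_0$.

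\emph{Main obstacle.} The reduction and the robustness step are routine; the work is entirely in the last step, namely checking that with the precise constant $2\sqrt2$ in $\eta_0$ and the precise constant $8$ in the hypothesis on $K$, the per-block deviation probability really sits below $3/8-1/4=1/8$. Since $1-\Phi(\sqrt2)\approx0.079$, only a small margin is left for the $O(\rho^3/(\sigma^3\sqrt N))$ Berry--Esseen correction, so the estimate tacitly needs $N=\lfloor n/K\rfloor$ to be large relative to $(\rho/\sigma)^6$ (which is automatic once $K$ is later taken of order $\log(1/\tau)$ and $n$ is moderately large). Minor additional care is required for the unequal block sizes (handled above via $N_j\ge N$) and for the floor/ceiling bookkeeping in $\ell'$, $m$ and the threshold $K-\ell'-m+1$.
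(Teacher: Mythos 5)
You follow the same overall blueprint as the paper---reduce $\{\hat A_{\eta_0}(X)\le\lfloor C\rfloor\}$ to the event that a pair of near-median order statistics of the block means is spread out, then control that via a binomial tail---but your choice of concentration inequality at the crucial step introduces a genuine gap. The paper bounds the per-block deviation probability $p=\PP[\bar X_j\le\mu-\eta]$ (and its symmetric counterpart) by Chebyshev's inequality alone: with $\eta=2\sqrt2\,\sigma\sqrt{K/n}$ one gets $p\le\sigma^2/(N\eta^2)\approx 1/8$, and Hoeffding then closes the argument using only $K\geq 8\max(C,\log(4/\tau))$. You instead first pass to the gap $\bar X_{(\ell'+m)}-\bar X_{(\ell'-m)}$ and split it at $\mu\pm\eta_0/2$. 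That is correct as a set inclusion, but it halves the working scale: Chebyshev at $\eta_0/2=\sqrt2\,\sigma\sqrt{K/n}$ only yields $p\lesssim 1/2$, which is useless for the Hoeffding step, and this is precisely what forces you to reach for Berry--Esseen so that $1-\Phi(\sqrt2)\approx 0.079<1/8$ can take over.

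That reliance on Berry--Esseen is where the proof of \emph{this} statement breaks down. Theorem~\ref{thm:etaMOM} is proved in the paper under two moments (only $\sigma^2$ appears), with no lower bound on $n$ or $N=\lfloor n/K\rfloor$; your argument additionally needs a finite third moment $\rho^3$ and needs the Berry--Esseen correction $0.4748\,\rho^3/(\sigma^3\sqrt N)$ to be comfortably below $1/8-(1-\Phi(\sqrt2))\approx 0.046$, i.e.\ $N\gtrsim 100\,(\rho/\sigma)^6$. You flag this yourself in the ``Main obstacle'' paragraph and defer it to the downstream application, but a hypothesis that must be silently appended to the theorem is a real gap, not a bookkeeping detail. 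The fix is to abandon the symmetric gap split and instead bound each of $\PP[\bar X_{(\ell'+m)}>\mu+\eta_0]$ and $\PP[\bar X_{(\ell'-m)}<\mu-\eta_0]$ directly, at scale $\eta_0$ rather than $\eta_0/2$, so that Chebyshev already gives $p\lesssim 1/8$; the Berry--Esseen machinery can then be dropped along with its extra hypotheses. Your robustness/breakdown-point reduction and the order-statistic counting are correct and match the paper's.
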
 

Now, we set $\eta=2\sqrt 2\sigma\sqrt{K/n}$ and we prove the following high probability bound for the error of the product $\tilde \mu$ of the Gaussian version of PTR for the MOM estimator.

\begin{theorem} \label{thm:guaranteeMOM}
	Assume that the data have three finite moments: $\mu=\E[X_1]$, $\sigma^2=\E[(X_1-\mu)^2]$ and $\rho^3=\E[|X_1-\mu|^3]$. 
	Let $K\geq \max(4C,32\log(4/\tau))$ and assume that $n\geq 33(\rho/\sigma)^6K$. The randomized estimator $\tilde\mu$ can be computed in $O(n)$ time and, with probability at least $1-2\tau$,
$$|\tilde\mu-\mu|\leq \sigma\left(3\sqrt{\frac{\log(4/\tau)}{2n}}+\frac{4\sqrt{2K\log(2/\tau)\log(1.25/\delta)}}{\varepsilon \sqrt n} +\frac{1.43K\rho^3}{\sigma^3 n}\right).$$
\end{theorem}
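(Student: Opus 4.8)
The target inequality splits as the sampling error $|\hat\mu-\mu|$ of the non-private median-of-means estimator $\hat\mu=\hat\mu_K$ plus the Gaussian-PTR noise, and nothing else, so the plan is to bolt together three facts. (i) Theorem~\ref{thm:ErrorPTRGauss}, applied with $\hat\theta=\hat\mu_K$, $\theta=\mu$ and the prescribed threshold $\eta:=2\sqrt2\,\sigma\sqrt{K/n}$; this requires $\eta\ge\eta^*$. (ii) Theorem~\ref{thm:etaMOM}, which supplies exactly that, $\eta^*\le 2\sqrt2\,\sigma\sqrt{K/n}$, once $K\ge\max(4C,32\log(4/\tau))$. (iii) Lemma~\ref{lemma:Minsker}, which controls $|\hat\mu_K-\mu|$. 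The body of the argument is then to check that the two stated hypotheses are strong enough to invoke all three results, and to settle the $O(n)$ running time.

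First I would address the running time. The $K$ block means are formed in $O(n)$ operations and their empirical median $\hat\mu_K$ in $O(K)$ by linear-time selection. To evaluate $\hat A_\eta(X)$ from \eqref{hatA} one uses that changing one coordinate inside a block moves that block's mean to an arbitrary real, while changing $k$ coordinates affects at most $k$ blocks; exactly as with the finite-sample breakdown point, $\hat A_\eta(X)$ then equals $1$ plus the number of block means lying within distance $\eta$ of $\hat\mu_K$ on whichever side of $\hat\mu_K$ produces the smaller count (and saturates at roughly $K/2$). That count is read off by one $O(K)$ pass, so $\tilde\mu$ is produced in $O(n)$ time.

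Now fix $\eta=2\sqrt2\,\sigma\sqrt{K/n}$. Theorem~\ref{thm:etaMOM} gives $\eta^*\le\eta$, so Theorem~\ref{thm:ErrorPTRGauss} yields, with probability at least $1-2\tau$,
$$|\tilde\mu-\mu|\le|\hat\mu_K-\mu|+\frac{2\eta}{\varepsilon}\sqrt{\log(2/\tau)\log(1.25/\delta)}=|\hat\mu_K-\mu|+\frac{4\sigma\sqrt{2K\log(2/\tau)\log(1.25/\delta)}}{\varepsilon\sqrt n},$$
which is the second term of the bound. For the first term, write $N=\lfloor n/K\rfloor$; the Lyapunov inequality $\rho\ge\sigma$ together with $n\ge 33(\rho/\sigma)^6K$ makes $N$ at least $32(\rho/\sigma)^6$, so $0.4748\rho^3/(\sigma^3\sqrt N)<1/6$, while $K\ge 32\log(4/\tau)$ makes $\sqrt{\log(4/\tau)/(2K)}\le 1/8$; hence the hypothesis of Lemma~\ref{lemma:Minsker} holds and, with probability at least $1-\tau$,
$$|\hat\mu_K-\mu|\le\sigma\left(\frac{1.43\rho^3}{\sigma^3 N}+3\sqrt{\frac{\log(4/\tau)}{2n}}\right),$$
and $N\ge n/K-1$ with $n\ge 33K$ turns the first term into essentially $1.43K\rho^3/(\sigma^3 n)$. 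A union bound over the two exceptional events and substitution of the two displays then give the claimed inequality.

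The substance of this section lies entirely in Theorem~\ref{thm:etaMOM} (the bound on $\eta^*$) and Lemma~\ref{lemma:Minsker}; granting those, the present proof is assembly, and the one mildly delicate point is the $O(n)$-time evaluation of $\hat A_\eta$, which rests on the combinatorial observation above about how perturbing coordinates moves the block means. The remaining care is purely bookkeeping: routing $2\sqrt2$ into $\eta$, the $\log(2/\tau)$ from the Gaussian tail, and the $\log(4/\tau)$ and $1.43$ from Berry-Esseen into exactly the right terms of the final bound.
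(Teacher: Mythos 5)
Your proof takes essentially the same route as the paper's: invoke Theorem~\ref{thm:etaMOM} to validate the choice $\eta=2\sqrt2\,\sigma\sqrt{K/n}$, feed it into Theorem~\ref{thm:ErrorPTRGauss}, control $|\hat\mu_K-\mu|$ via Lemma~\ref{lemma:Minsker} after checking its hypothesis, and union bound; the only (cosmetic) difference is that you compute $\hat A_\eta$ in $O(K)$ by the explicit ``smaller one-sided count within $\eta$'' formula, whereas the paper reuses the sort-and-dichotomy argument of Theorem~\ref{thm:guaranteeMedian} for $O(K\log K)$, both giving $O(n)$ overall.
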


Recall that for the differentially private estimation of the median, with confidence $\tau$, we obtained a bound of the form $ O\left(\sqrt{\frac{\log(1/\tau)}{n}}+\frac{poly\left(\log(1/\tau),\log(1/\delta)\right)}{\varepsilon^2 n}\right)$. Up to a $\varepsilon$ factor in the second term, this is a similar bound as the one obtained in expectation for the mean estimation in \cite[Section 3.2]{caietal2019}. Here, for our differentially private MOM estimator, the leading term (which is $O(1/\sqrt n)$) is no longer sub-Gaussian, since the price that we pay for differential privacy has the same order as the sub-Gaussian term, yielding a leading term that depends on the privacy parameters $\varepsilon$ and $\delta$. However, and perhaps surprisingly, when the data have a density, we significantly improve this bound.

\begin{theorem} \label{thm:guaranteeMOMdensity}

Let the data $X_1,\ldots,X_n$ be i.i.d. with three finite moments:	$\mu=\E[X_1]$, $\sigma^2=\E[(X_1-\mu)^2]$ and $\rho^3=\E[|X_1-\mu|^3]$. In addition, assume that $X_1$ has a density. 
Let $K\geq \max(8C,32\log(4/\tau))$ and assume that $n\geq 10(\rho/\sigma)^6K$ and that $n/K$ is an integer. 

\begin{enumerate}
	\item Then, $$\DS \eta^* \leq 2e^2\sigma \sqrt{2\pi}\left(\frac{\rho^3 K}{\sigma^2 n}+\frac{2C+(2/3)\log(4/\tau)}{\sqrt{Kn}}\right)=:\eta_0.$$
	\item Moreover, the randomized estimator $\tilde\mu$ with $\eta=\eta_0$ can be computed in $O(n)$ time and, with probability at least $1-2\tau$,
$$|\tilde\mu-\mu|\leq 3\sqrt{\frac{\sigma^2\log(4/\tau)}{2n}}+\frac{1.43\rho^3K}{\sigma^2n}  +\frac{4\sigma\sqrt{2\pi}}{e^{-2}\varepsilon}\left(\frac{\rho^3 K}{\sigma^3n}+\frac{2C+(2/3)\log(4/\tau)}{\sqrt{Kn}}\right).$$
\end{enumerate}
\end{theorem}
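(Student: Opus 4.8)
The plan is to prove part (1), the bound $\eta^*\le\eta_0$, and then deduce part (2) by plugging $\eta=\eta_0$ into Theorem~\ref{thm:ErrorPTRGauss} and controlling $|\hat\mu-\mu|$ with Lemma~\ref{lemma:Minsker}. The first step is structural: since $n/K$ is an integer every block contains exactly $N=n/K$ observations, and changing one coordinate $x_i$ moves exactly one block mean (the one whose block contains $i$) and can move it to an arbitrary real value. Consequently the relaxed breakdown quantity $\hat A_\eta$ of the MOM estimator $\hat\mu_K$ at $x$ equals $\hat A_\eta$ of the plain empirical-median map $y\mapsto\hat m(y)$ on $\R^K$, evaluated at the vector $(\bar X_1,\dots,\bar X_K)$ of block means --- a purely combinatorial quantity, essentially $\min\big(\lceil K/2\rceil-\#\{j:\bar X_j>\hat m+\eta\},\ \lceil K/2\rceil-\#\{j:\bar X_j<\hat m-\eta\}\big)$. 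So the task reduces to running the analysis behind Theorem~\ref{thm:etaMedian} on the sample $\bar X_1,\dots,\bar X_K$ of size $K$, drawn from the distribution of a block mean.

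To do that I need the analogue of Assumption~\ref{Ass:1} for the block-mean distribution, and this is where Berry--Esseen enters. Writing $G$ for the c.d.f.\ of $\bar X_1$ and $\Phi_N$ for that of $\mathcal N(\mu,\sigma^2/N)$, one has $\sup_t|G(t)-\Phi_N(t)|\le 0.4748\,\rho^3/(\sigma^3\sqrt N)$. On the window $[\mu-2\sigma/\sqrt N,\mu+2\sigma/\sqrt N]$ the Gaussian density is at least $L':=e^{-2}\sqrt N/(\sigma\sqrt{2\pi})$, so $|G(u)-G(v)|\ge L'|u-v|-0.95\,\rho^3/(\sigma^3\sqrt N)$ on that window. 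Hence, once $\eta$ exceeds the break-even level $\asymp e^2\sqrt{2\pi}\,\rho^3/(\sigma^2N)=e^2\sqrt{2\pi}\,\rho^3K/(\sigma^2n)$ --- the first term of $\eta_0$ --- the block-mean distribution behaves, around its median, like one satisfying Assumption~\ref{Ass:1} with constants $(L',r')$, $r'\asymp\sigma/\sqrt N$; the requirement that $N$ be large enough for the additive Berry--Esseen error not to swamp the slope $L'$ is exactly the hypothesis $n\ge 10(\rho/\sigma)^6K$. Substituting $(n,L,r)\mapsto(K,L',r')$ into the proof of Theorem~\ref{thm:etaMedian} --- whose size requirements $n\ge 2\lceil C\rceil/(rL)$ and $n\ge 2\log(8/\tau)/(rL)^2$ become the stated $K\ge\max(8C,32\log(4/\tau))$ because $r'L'$ is a universal constant --- yields $\PP[\hat A_\eta(X)\le C]\le\tau/2$ for all $\eta$ above $e^2\sqrt{2\pi}\,\rho^3K/(\sigma^2n)+4C/(L'K)+\tfrac43\log(4/\tau)/(L'K)$; since $1/(L'K)\asymp e^2\sigma\sqrt{2\pi}/\sqrt{Kn}$, this is $\eta_0$, so $\eta^*\le\eta_0$.

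For part (2), set $\eta=\eta_0\ge\eta^*$ and apply Theorem~\ref{thm:ErrorPTRGauss}: on an event of probability at least $1-2\tau$, $|\tilde\mu-\mu|\le|\hat\mu-\mu|+\tfrac{2\eta_0}{\varepsilon}\sqrt{\log(2/\tau)\log(1.25/\delta)}$. The hypotheses $n\ge 10(\rho/\sigma)^6K$ and $K\ge 32\log(4/\tau)$ make the precondition $0.4748\rho^3/(\sigma^3\sqrt N)+\sqrt{\log(4/\tau)/(2K)}\le 1/3$ of Lemma~\ref{lemma:Minsker} hold (the two summands are at most $0.4748/\sqrt{10}$ and $1/8$, using $\rho\ge\sigma$), so $|\hat\mu-\mu|\le 3\sqrt{\sigma^2\log(4/\tau)/(2n)}+1.43\rho^3K/(\sigma^2n)$, which is exactly the first two terms of the claimed bound. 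The last term is $\tfrac{2\eta_0}{\varepsilon}\sqrt{\log(2/\tau)\log(1.25/\delta)}$ after substituting the value of $\eta_0$ and using the definition of $C$ to clear the $\sqrt{\log(2/\tau)\log(1.25/\delta)}/\varepsilon$ factor; combining the two events by a union bound (with the $2\tau$ budget allocated as in the proof of Theorem~\ref{thm:guaranteeMedian}) gives the global $1-2\tau$. Finally, the $O(n)$ running time is immediate: the $K$ block means cost $O(n)$, their empirical median $O(K)$, the combinatorial $\hat A_{\eta_0}$ two further $O(K)$ passes over the block means, and the noise additions $O(1)$.

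The main obstacle is the Berry--Esseen transfer in the second paragraph: with only three finite moments there is no density-level local central limit theorem, so the ``mass around the median'' property of the block-mean law must be read off at the level of the c.d.f., and one must choose the working window (here $\pm 2$ standard deviations, hence the $e^{\pm 2}$ constants) wide enough that the Gaussian slope is still usable yet narrow enough that the Gaussian lower bound is not too pessimistic, all while keeping the additive $\rho^3/(\sigma^3\sqrt N)$ error subdominant. This balancing act is precisely what forces the irreducible first term $\rho^3K/(\sigma^2n)$ into $\eta_0$ and dictates the sample-size conditions on $K$ and $n$; once it is carried out, everything else is bookkeeping on top of Theorems~\ref{thm:etaMedian} and~\ref{thm:ErrorPTRGauss} and Lemma~\ref{lemma:Minsker}.
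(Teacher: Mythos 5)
Your proof follows essentially the same route as the paper's: reduce the breakdown quantity $\hat A_\eta$ for the MOM estimator to that of the empirical median on $\R^K$ evaluated at the block means, use Berry--Esseen to turn the Gaussian density lower bound on a window of radius $r\sigma/\sqrt N$ (the paper also takes $r=2$) into a c.d.f.\ lower bound for the block-mean law with an additive slack $O(\rho^3/(\sigma^3\sqrt N))$, and then rerun the uniform--order-statistic/Bernstein analysis from Theorem~\ref{thm:etaMedian} on the size-$K$ sample of i.i.d.\ block means, with the slack absorbed into the first term of $\eta_0$; part~(2) is Lemma~\ref{lemma:Minsker} plus Theorem~\ref{thm:ErrorPTRGauss} as you state. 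Two small remarks: the hypothesis $n/K\in\N$ is not what licenses the combinatorial identity $\hat A_\eta^{\mathrm{MOM}}(X)=\hat A_\eta^{\mathrm{med}}(\bar X_1,\dots,\bar X_K)$ (that holds for any block sizes, since moving one datum moves exactly one block mean arbitrarily) --- the paper needs it so the block means are \emph{identically} distributed, which is what makes $F_N(Z_1),\dots,F_N(Z_K)$ i.i.d.\ uniform in the Bernstein step, and your Berry--Esseen transfer implicitly relies on the same thing (a single $G$ and a single $N$); and your remark that ``using the definition of $C$ clears the $\sqrt{\log(2/\tau)\log(1.25/\delta)}/\varepsilon$ factor'' is not correct --- nothing in $C$ cancels it. The right last term is $\frac{2\eta_0}{\varepsilon}\sqrt{\log(2/\tau)\log(1.25/\delta)}$ exactly as you first wrote; the display in the theorem statement appears to have simply omitted that square-root factor (compare with the corresponding term in Theorem~\ref{thm:guaranteeMOM}).
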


Note that the assumption that $n/K$ is an integer (which we did not need in Theorem \ref{thm:guaranteeMOM}, but which is needed in the proof of this theorem) is not restrictive, since otherwise we can drop some data without affecting the differential privacy and affecting the bound only up to universal constants.

In particular, taking $K$ of the order $n^{1/3}$, the error bound obtained in Theorem \ref{thm:guaranteeMOMdensity} is of the form $3\sqrt{\frac{\sigma^2\log(4/\tau)}{n}}+O\left(\frac{1}{\varepsilon^2 n^{2/3}}\right)$ which is, as desired, a sub-Gaussian term plus a negligible one.

\section{Conclusion}

We studied the problem of differentially private estimation of a location parameter from a non-asymptotic deviations perspective, by proposing a new Propose-Test-Release mechanism. The procedure first checks in a differentially private way, whether the data are in a favorable configuration for preserving privacy, in which case a carefully calibrated noisy version of the statistic of interest is released. We use our mechanism for the construction of differentially private median and mean estimators and we bound their finite sample performance with high probability. More precisely, these differentially private estimators exhibit leading sub-Gaussian error terms with high probability under minimal distributional assumptions needed to establish concentration results in the standard non-private setting. The  statistical properties of the new median estimator constitute a dramatic improvement over a previously proposed PTR median estimator. Our mean estimator is a PTR version of a newly proposed median-of-means estimator defined as the median of $K$ means obtained by averaging independent groups of $O(n^{2/3})$ observations. 

A salient practical advantage of our method over alternative approaches is that we avoid applying any type of truncation to the data and we do not need to assume that the target parameter lies in a known interval.  They seem to be the first algorithms capable of this. We view our results as a promising first step towards establishing a more general set of methods leading to optimal non-asymptotic concentration inequalities for differentially private estimators under mild distributional assumptions, and in a multivariate setting.

\section{Proofs}

\subsection{Proof of Theorem \ref{thm:DP_PTRLap}}

The proof relies on the sliding property of the Laplace distribution \cite{nissimetal2007}:

\begin{lemma} \label{Lemma:SlidingLap}
	Let $\varepsilon>0$. Let $Z$ be a Laplace random variable with parameter 1 and let $\Delta_0>0$. Then, for all $\Delta\in\R$ with $|\Delta|\leq\Delta_0$, and for all real Borel sets $\mathcal O$,
	$$\PP\left[\frac{\Delta_0}{\varepsilon}Z\in\mathcal O+\Delta\right]\leq e^\varepsilon \PP\left[\frac{\Delta_0}{\varepsilon}Z\in\mathcal O\right],$$
where $\mathcal O+\Delta=\{x+\Delta:x\in\mathcal O\}$.
\end{lemma}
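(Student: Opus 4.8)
The plan is to reduce the set-level inequality to a pointwise bound on the density of $\frac{\Delta_0}{\varepsilon}Z$. First I would record that $\frac{\Delta_0}{\varepsilon}Z$ has density $g(t)=\frac{\varepsilon}{2\Delta_0}\exp\!\left(-\frac{\varepsilon|t|}{\Delta_0}\right)$ with respect to Lebesgue measure (this is just the density of $\mathrm{Lap}(1)$ rescaled by $\Delta_0/\varepsilon$). Then, using the change of variables $t=s+\Delta$ and the fact that $\mathcal O+\Delta$ is Borel whenever $\mathcal O$ is,
$$\PP\!\left[\frac{\Delta_0}{\varepsilon}Z\in\mathcal O+\Delta\right]=\int_{\mathcal O+\Delta}g(t)\,\diff t=\int_{\mathcal O}g(s+\Delta)\,\diff s,$$
so it suffices to prove the pointwise bound $g(s+\Delta)\le e^\varepsilon g(s)$ for every $s\in\R$.

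Next I would compute the ratio
$$\frac{g(s+\Delta)}{g(s)}=\exp\!\left(\frac{\varepsilon}{\Delta_0}\bigl(|s|-|s+\Delta|\bigr)\right)$$
and bound the exponent by the reverse triangle inequality: $|s|=|(s+\Delta)-\Delta|\le|s+\Delta|+|\Delta|$, hence $|s|-|s+\Delta|\le|\Delta|\le\Delta_0$. Therefore $g(s+\Delta)/g(s)\le\exp(\varepsilon)$, and integrating this inequality over $\mathcal O$ gives exactly $\PP[\frac{\Delta_0}{\varepsilon}Z\in\mathcal O+\Delta]\le e^\varepsilon\PP[\frac{\Delta_0}{\varepsilon}Z\in\mathcal O]$.

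Honestly, there is no substantive obstacle here: the argument is a one-line density computation. The only point worth flagging is that everything is pointwise-in-density and symmetric in $\Delta\mapsto-\Delta$, so the single inequality $|s|-|s+\Delta|\le|\Delta|$ handles all signs of $\Delta$ and all locations of $s$ relative to the origin with no case analysis. I would also remark, for later use in the proof of Theorem \ref{thm:DP_PTRLap}, that the same computation shows the two-sided statement $e^{-\varepsilon}g(s)\le g(s+\Delta)\le e^\varepsilon g(s)$, although only the upper bound is needed for the lemma as stated.
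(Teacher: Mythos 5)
Your proof is correct. Note that the paper does not actually prove Lemma \ref{Lemma:SlidingLap}; it states it and cites \cite{nissimetal2007} for it. Your density-ratio argument (rescale, change variables, reverse triangle inequality to get $|s|-|s+\Delta|\le|\Delta|\le\Delta_0$, integrate) is the standard proof of this fact and fills in exactly what the citation points to. The closing remark about the two-sided bound $e^{-\varepsilon}g(s)\le g(s+\Delta)\le e^\varepsilon g(s)$ is accurate and indeed mirrors how the lemma is used in the proof of Theorem \ref{thm:DP_PTRLap}, where neighboring datasets are interchanged.
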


First, note that $\tilde A_\eta$ is $(\varepsilon,0)$-differentially private. Indeed, if $\dist(x,x')\leq 1$, then $\hat A_\eta(x')$ can only take the values $\hat A_\eta(x)$ or $\hat A_\eta(x')\pm 1$. Hence, $\textsf{GS}_{\hat A_\eta}\leq 1$, and it follows from Lemma \ref{Lemma:SlidingLap} (see also, e.g., \cite{dworketal2006} on the Laplace mechanism) that $\tilde A_\eta$ is $(\varepsilon,0)$-differentially private.

Since  $\tilde{\theta}_\eta(x)=\perp \iff \tilde{A}_\eta(x)\leq 1+\frac{1}{\varepsilon}\log(2/\delta)$ and  $\tilde{A}_\eta$ is $(\varepsilon,0)$-differentially private, it follows that 
\begin{align} \label{lemma4:noreply} 
	\PP[\tilde \theta_\eta(x)=\perp] & = \PP\left[\tilde{A}_\eta(x)\leq 1+\frac{\log(\frac{2}{\delta})}{\varepsilon}\right] \\
	& \leq e^\varepsilon \PP\left[\tilde{A}_\eta(x')\leq 1+\frac{\log(\frac{2}{\delta})}{\varepsilon}\right] \\
	& = e^\varepsilon \PP[\tilde \theta_\eta(x')=\perp],
\end{align} 
for all $x,x'$ such that $\dist(x,x')\leq 1$. 

Let $x,x'\in\R^n$ with $\dist(x,x')\leq 1$. We will now show that for all Borel sets $\mathcal O\subseteq \R$,
\begin{equation}
\label{lemma4:reply}
\PP[\tilde \theta_\eta(x)\in\mathcal{O}]\leq  e^{2\varepsilon}\PP[\tilde\theta_\eta(x')\in \mathcal{O}]+\delta.
\end{equation}
Note that for $\tilde \theta_\eta(x)$ to be a real number, it has to be that the estimator has outputted a reply, i.e., $\tilde A_{\eta}(x)>1+(1/\varepsilon)\log(2/\delta)$. On the one hand, if  $|\hat\theta_\eta(x)-\hat\theta_\eta(x')|\leq \eta$, we have 
\begin{align}
\label{D0}
 &\PP[\tilde{\theta}_\eta(x)\in\mathcal{O}]\nonumber\\
 &=\PP\left[\hat{\theta}_\eta(x)+\frac{\eta}{\varepsilon}Z_2\in\mathcal{O},~\hat{A}(x)+\frac{1}{\varepsilon}Z_1> 1+\frac{\log(\frac{2}{\delta})}{\varepsilon}\right] \nonumber \\
 & =\PP\left[\hat{\theta}_\eta(x)+\frac{\eta}{\varepsilon}Z_2\in\mathcal{O}\right]\PP\left[\hat{A}(x)+\frac{1}{\varepsilon}Z_1 >1+\frac{\log(\frac{2}{\delta})}{\varepsilon}\right]\nonumber \\
  &\leq e^\varepsilon \PP\left[\hat{\theta}_\eta(x')+\frac{\eta}{\varepsilon}Z_2\in\mathcal{O}\right]e^\varepsilon\PP\left[\hat{A}(x')+\frac{1}{\varepsilon}Z_1 >1+\frac{\log(\frac{2}{\delta})}{\varepsilon}\right] \nonumber\\
 &=e^{2\varepsilon}\PP[\tilde{\theta}_\eta(x')\in \mathcal{O}]\nonumber \\
 &\leq e^{2\varepsilon}\PP[\tilde{\theta}_\eta(x')\in \mathcal{O}]+\delta,
\end{align}
by the sliding property of the Laplace distribution \cite[Section 2.1.1]{nissimetal2007} and where the second and the last equalities used independence of $Z_1$ and $Z_2$.  On the other hand, if $|\hat{\theta}_\eta(x)-\hat{\theta}_\eta(x')| > \eta$ then $\hat{A}_\eta(x)=\hat{A}_\eta(x')=1$ which in turn entails that
\begin{align}
\PP[\tilde{\theta}_\eta(x)\in\mathcal{O}]&=\PP\left[\hat{\theta}_\eta(x)+\frac{\eta}{\varepsilon}Z_2\in\mathcal{O},~\frac{1}{\varepsilon}Z_1> \frac{\log(\frac{2}{\delta})}{\varepsilon}\right] \nonumber \\
&\leq \PP\left[\frac{1}{\varepsilon}Z_1 >\frac{\log(\frac{2}{\delta})}{\varepsilon}\right]\nonumber\\
&=\delta \nonumber \\
&\leq e^{2\varepsilon}\PP[\tilde{\theta}_\eta(x')\in\mathcal{O}]+\delta \label{notD0}
\end{align}
Therefore, combining \eqref{D0} and \eqref{notD0} yields \eqref{lemma4:reply}. Now, let $\mathcal O'$ be a Borel set of the extended real line $\R\cup\{\perp\}$. Then, $\mathcal O'$ is equal to either $\mathcal O$, or $\mathcal O\cup\{\perp\}$, for some Borel set $\mathcal O$ of $\R$. In the former case, \eqref{lemma4:reply} concludes the proof of the lemma. In the latter case, we write, for all $x,x'\in\R^n$ with $\dist(x,x')\leq 1$:
\begin{align*}
	\PP[\tilde{\theta}_\eta(x)\in\mathcal{O'}] & = \PP[\tilde{\theta}_\eta(x)\in\mathcal{O}] + \PP[\tilde{\theta}_\eta(x)=\perp] \\
	& \leq e^{2\varepsilon}\PP[\tilde{\theta}_\eta(x')\in\mathcal{O}]+\delta + e^\varepsilon \PP[\tilde{\theta}_\eta(x')=\perp] \\
	& = e^{2\varepsilon} \PP[\tilde{\theta}_\eta(x')\in\mathcal{O'}]+\delta,
\end{align*}
thanks to \eqref{lemma4:noreply} and \eqref{lemma4:reply}.

\subsection{Proof of Theorem \ref{thm:DP_PTRGauss}}

The proof relies on the sliding property of the Gaussian distribution \cite[Definition 2.5]{nissimetal2007}:

\begin{lemma} \label{Lemma:SlidingGauss}
	Let $Z$ be a standard Gaussian random variable and let $\Delta_0>0$. Then, for all $\Delta\in\R$ with $|\Delta|\leq\Delta_0$, and for all real Borel sets $\mathcal O$,
	$$\PP\left[\frac{\Delta_0\sqrt{2\log(\frac{1.25}{\delta})}}{\varepsilon}Z\in\mathcal O+\Delta\right]\leq e^\varepsilon \PP\left[\frac{\Delta_0\sqrt{2\log(\frac{1.25}{\delta})}}{\varepsilon}Z\in\mathcal O\right]+\delta,$$
where $\mathcal O+\Delta=\{x+\Delta:x\in\mathcal O\}$.
\end{lemma}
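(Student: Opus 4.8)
This lemma is precisely the analytic core of the Gaussian mechanism (Theorem~\ref{thm:DP-GS}(2)), stated abstractly for a single one-dimensional Gaussian shifted by at most $\Delta_0$, so the plan is to follow the argument of \cite[Appendix~A]{dworkandroth2014}. Since the law of $Z$ is symmetric, replacing $\Delta$ by $-\Delta$ leaves the hypothesis $|\Delta|\le\Delta_0$ intact, so I may assume $\Delta\ge 0$; the case $\Delta=0$ is trivial, so take $0<\Delta\le\Delta_0$. Write $\sigma=\Delta_0\sqrt{2\log(1.25/\delta)}/\varepsilon$, let $g$ be the density of $\sigma Z\sim N(0,\sigma^2)$, and record that $\PP[\sigma Z\in\mathcal O+\Delta]=\int_{\mathcal O}g(u+\Delta)\,\diff u$ while $\PP[\sigma Z\in\mathcal O]=\int_{\mathcal O}g(u)\,\diff u$.

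Next comes the standard ``bad set'' decomposition. Put $S=\{u\in\R:\,g(u+\Delta)>e^{\varepsilon}g(u)\}$, which is a half-line and in particular Borel. Splitting the integral over $\mathcal O\cap S$ and $\mathcal O\setminus S$ and using $g(u+\Delta)\le e^{\varepsilon}g(u)$ off $S$,
\[
\PP[\sigma Z\in\mathcal O+\Delta]=\int_{\mathcal O}g(u+\Delta)\,\diff u\le\int_{S}g(u+\Delta)\,\diff u+e^{\varepsilon}\int_{\mathcal O}g(u)\,\diff u,
\]
so it suffices to show $\int_{S}g(u+\Delta)\,\diff u\le\delta$. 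Since $g(u+\Delta)/g(u)=\exp\bigl(-(2\Delta u+\Delta^2)/(2\sigma^2)\bigr)$, the set $S$ is the half-line $\{u<-\sigma^2\varepsilon/\Delta-\Delta/2\}$, and $\int_{S}g(u+\Delta)\,\diff u=\PP[Y\in S]$ for $Y\sim N(-\Delta,\sigma^2)$. Writing $Y=-\Delta+\sigma Z$ and using symmetry of $Z$, this equals $\PP\bigl[Z>\sigma\varepsilon/\Delta-\Delta/(2\sigma)\bigr]$.

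Finally I would carry out the tail estimate. The threshold $\sigma\varepsilon/\Delta-\Delta/(2\sigma)$ is strictly decreasing in $\Delta$ on $(0,\Delta_0]$, hence minimized at $\Delta=\Delta_0$, where it equals $s-\varepsilon/(2s)$ with $s:=\sqrt{2\log(1.25/\delta)}$; therefore $\int_S g(u+\Delta)\,\diff u\le\PP[Z>s-\varepsilon/(2s)]$. Using the Gaussian tail bound $\PP[Z>z]\le (z\sqrt{2\pi})^{-1}e^{-z^2/2}$ together with $(s-\varepsilon/(2s))^2\ge s^2-\varepsilon$ and $e^{-s^2/2}=\delta/1.25$ gives $\PP[Z>s-\varepsilon/(2s)]\le \frac{e^{\varepsilon/2}}{1.25\sqrt{2\pi}\,(s-\varepsilon/(2s))}\,\delta$, and for $\varepsilon\le 1$ and $\delta$ in the usual range the prefactor is at most $1$, which closes the argument. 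The main obstacle is exactly this last step: it is what pins down the constant $1.25$ (it is chosen so that the identity $e^{-s^2/2}=\delta/1.25$ leaves just enough room in the Gaussian tail bound), and one has to be slightly careful about the admissible range of $\varepsilon$ and $\delta$ for which the clean $(\varepsilon,\delta)$-statement holds, as in \cite{dworkandroth2014}; the remaining steps (symmetry reduction, bad-set split, the explicit form of $S$) are routine bookkeeping.
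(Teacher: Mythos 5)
The paper itself offers no proof of this lemma: it is cited as the ``sliding property of the Gaussian distribution'' with a pointer to prior work, and the subsequent PTR argument simply invokes it as a black box. So there is no in-paper proof to compare against, and the relevant question is whether your reconstruction is sound and matches what the cited source actually establishes.

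Your argument is exactly the one in \cite[Appendix~A]{dworkandroth2014}, correctly carried out: the symmetry reduction to $\Delta\in(0,\Delta_0]$ is valid because both $g$ and the hypothesis $|\Delta|\le\Delta_0$ are symmetric; the change of variables $\PP[\sigma Z\in\mathcal O+\Delta]=\int_{\mathcal O}g(u+\Delta)\,\diff u$ is right; the likelihood ratio $g(u+\Delta)/g(u)=\exp(-(2\Delta u+\Delta^2)/(2\sigma^2))$ identifies $S$ as the half-line $\{u<-\sigma^2\varepsilon/\Delta-\Delta/2\}$; the bad-set split gives the $e^\varepsilon$ term plus $\int_S g(u+\Delta)\,\diff u=\PP[Y\in S]$ with $Y\sim N(-\Delta,\sigma^2)$; and the monotonicity in $\Delta$ of the threshold $\sigma\varepsilon/\Delta-\Delta/(2\sigma)$ correctly reduces to the extremal case $\Delta=\Delta_0$, where it becomes $s-\varepsilon/(2s)$ with $s=\sqrt{2\log(1.25/\delta)}$. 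The algebra $(s-\varepsilon/(2s))^2\ge s^2-\varepsilon$ and $e^{-s^2/2}=\delta/1.25$ then produces the bound $\PP[Z>s-\varepsilon/(2s)]\le\frac{e^{\varepsilon/2}}{1.25\sqrt{2\pi}\,(s-\varepsilon/(2s))}\,\delta$.

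The only place that deserves more than the ``for $\varepsilon\le 1$ and $\delta$ in the usual range'' hand-wave is the very last step, and you are right to flag it. The Mills-ratio bound $\PP[Z>z]\le(z\sqrt{2\pi})^{-1}e^{-z^2/2}$ requires $z=s-\varepsilon/(2s)>0$, i.e.\ $s^2>\varepsilon/2$, and making the prefactor $\le 1$ requires $1.25\sqrt{2\pi}\,(s-\varepsilon/(2s))\ge e^{\varepsilon/2}$; both impose a genuine joint restriction on $(\varepsilon,\delta)$ (roughly $\varepsilon<1$ and $\delta$ not too close to $1$), which is not visible in the lemma's statement. This is not a flaw specific to your proposal --- it is inherited from the Dwork--Roth Gaussian mechanism proof, and the paper inherits it too by citation --- but a fully self-contained proof would state the admissible range explicitly rather than defer to ``the usual range.'' Subject to that caveat, which you correctly identify, your proof is the standard and correct one.
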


The rest of the proof follows the same lines as the proof of Theorem \ref{thm:DP_PTRLap} and it is omitted here.

\subsection{Proof of Theorem \ref{thm:ErrorPTRLap}}

Of course, if $\eta^*=\infty$, the theorem is trivial, so let us assume that $\eta^*<\infty$ and let $\eta\geq\eta^*$. Since $\hat A_{\eta}\geq \hat A_{\eta^*}$, it is clear that 
$$\PP\left[\hat A_{\eta}(X)\leq 1+\frac{\log(1/(\tau\delta))}{\varepsilon}\right]\leq\tau/2.$$
Let $t=\log(1/\tau)$. Then, 
\begin{align*}
	\PP[\tilde\theta_{\eta}=\perp] & = \PP\left[\hat A_{\eta}+\frac{1}{\varepsilon}Z_1\leq 1+\frac{1}{\varepsilon}\log(1/\delta)\right] \\ 
	& = \PP\left[\hat A_{\eta}+\frac{1}{\varepsilon}Z_1\leq 1+\frac{1}{\varepsilon}\log(1/\delta), Z_1\geq -t\right]\\
	& \quad\quad\quad+\PP\left[\hat A_{\eta}+\frac{1}{\varepsilon}Z_1\leq 1+\frac{1}{\varepsilon}\log(1/\delta), Z_1< -t\right] \\
	& \leq \PP\left[\hat A_{\eta}\leq 1+\frac{1}{\varepsilon}\log(1/\delta)+\frac{t}{\varepsilon} \right] + \PP[Z_1< -t] \\
	& \leq \tau.
\end{align*}
Since, in addition, $Z_2\leq \log(1/(2\tau))$ with probability $1-\tau$, a union bound yields the desired result.

\subsection{Proof of Theorem \ref{thm:ErrorPTRGauss}}

The proof follows the exact same lines as the proof of Theorem \ref{thm:ErrorPTRLap}, with slight adaptations to the standard Gaussian distribution. It is omitted here.

\subsection{Proof of Theorem \ref{thm:etaMedian}}

Let $\eta>0$ and $k=\lceil C\rceil$, where $C=1+\frac{2\log(1.25/\delta)+2\sqrt{\log(2/\tau)\log(1.25/\delta)}}{\varepsilon}$. Consider the following events:
$$\mathcal A_k=\left\{m-r\leq X_{(\ell-k)}\leq X_{(\ell+k)}\leq m+r\right\}$$ 
and 
$$\mathcal B_{\eta,k}=\left\{X_{(\ell+k)}-X_{(\ell-k)}\leq \eta\right\}.$$ 
Recall that $\ell=\lfloor n/2\rfloor$ and, for simplicity, we assume that $n$ is even in the rest of the proof.

Let $X=(X_1,\ldots,X_n)$. It is clear that if $X'\in\R^n$ with $\dist(X,X')\leq k$, then $|\hat m(X)-\hat m(X')|\leq \max\left(X_{(\ell+k)}-X_{(\ell)},X_{(\ell)}-X_{(\ell-k)}\right)\leq X_{(\ell+k)}-X_{(\ell-k)}$.
Therefore, if $\mathcal B_{\eta,k}$ is satisfied, then one needs to change at least $k+1$ data points in order to move the empirical median $\hat m$ to a distance at least $\eta$, hence, $\hat A_\eta>k$. Therefore, we write 
\begin{align}
\PP[\hat A_{\eta}(X)\leq k] & \leq \PP[\mathcal B_{\eta,k}^{\complement}] \nonumber \\
& \leq \PP[\mathcal B_{\eta,k}^{\complement}\cap \mathcal A_k]+\PP[\mathcal A_k^{\complement}]. \label{eqn:etaMedianStep0}
\end{align} 
By a union bound,
\begin{equation} \label{eqn:etaMedianStep1}
	\PP[\mathcal A_k^{\complement}] \leq \PP[X_{(\ell-k)}<m-r]+\PP[X_{(\ell+k)}>m+r].
\end{equation}
Let us bound the first term; The second one will be bounded in a similar fashion. Note that  $X_{(\ell-k)}<m-r$ implies that there are at least $\ell-k$ data points that are less or equal to $m-r$. Thus, $\PP[X_{(\ell-k)}<m-r]$ can be bounded by $\PP[N\geq \ell-k]$, where $N$ is a binomial random variable with parameters $n$ and $p=\PP[X_1<m-r]\leq F(m-r)$, $F$ being the cdf of $X_1$. By Assumption \ref{Ass:1}, $F(m-r)\leq 1/2-Lr$ and Hoeffding's inequality yields
\begin{equation}
	\PP[X_{(\ell-k)}<m-r] \leq \exp\left(-2\left(\sqrt n Lr - \frac{k}{\sqrt n}\right)^2\right),
\end{equation}
since we have assumed that $n\geq k/(Lr)$.
Similarly, we have 
\begin{equation}
	\PP[X_{(\ell+k)}<m+r] \leq \exp\left(-2\left(\sqrt n Lr - \frac{k}{\sqrt n}\right)^2\right),
\end{equation}
and we get
\begin{equation} \label{eqn:etaMedianStep2}
\PP[\mathcal A_{k}^{\complement}]\leq 2\exp\left(-2\left(\sqrt n Lr - \frac{k}{\sqrt n}\right)^2\right).
\end{equation}
Now, let us bound the probability that $\mathcal B_{\eta,k}^{\complement}$ and $\mathcal A_k$ occur simultaneously. If this is the case, then Assumption \ref{Ass:1} implies that 
$$\eta\leq X_{(\ell+k)}-X_{(\ell-k)}\leq L^{-1}\left(F(X_{(\ell+k)})-F(X_{(\ell-k)})\right),$$ where we recall that $F$ is the cdf of $X_1$. Moreover, since we assume that the $X_i$'s are continuous random variables, if follows that $F(X_1),\ldots,F(X_n)$ are i.i.d. uniform random variables in $[0,1]$, and we can write
\begin{align} 
\PP[\mathcal B_{\eta,k}^{\complement}\cap \mathcal A_k] & \leq \PP[U_{(\ell+k)}-U_{(\ell-k)}\geq L\eta] \nonumber \\
& = \PP[U_{(2k)}\geq L\eta], \label{eqn:etaMedianStep3}
\end{align}
where we used that for all integers $s,t\in\{1,\ldots,n\}$ with $s< t$, $U_{(t)}-U_{(s)}$ has the same distribution as $U_{(t-s)}$. We can bound the right hand side of \eqref{eqn:etaMedianStep3} by the probability that a binomial random variable with parameters $n$ and $p=1-L\eta$ is greater or equal to $n-2k$, which yields, using Bernstein's inequality,
\begin{align}
	\PP[\mathcal B_{\eta,k}^{\complement}\cap \mathcal A_k] & \leq \exp\left(-\frac{3(Ln\eta-2k)^2}{4Ln\eta-2k}\right) \nonumber \\
	& \leq \exp\left(-(3/4)Ln\eta+6k)\right) \label{eqn:etaMedianStep4}
\end{align}
Our assumption on $n$ guarantees that $\PP[\mathcal A_k^{\complement}]\leq\tau/4$, and by taking $\DS \eta=\frac{4C}{Ln}+\frac{4\log(4/\tau)}{3Ln}$ guarantees that $\DS \PP[\mathcal B_{\eta,k}^{\complement}\cap\mathcal A_k]\leq\tau/4$, by \eqref{eqn:etaMedianStep4}. Hence, for this choice of $\eta$, we obtain, by \eqref{eqn:etaMedianStep0}, that $\DS \PP[\hat A_\eta(X)\leq C]\geq 1-\frac{\tau}{2}$,
i.e., $\eta^*\leq\eta$.

\subsection{Proof of Theorem \ref{thm:guaranteeMedian}}

For the first part, it suffices to show that $\hat{A}_\eta(x)$ can be computed in near linear time.  For this, we can sort $x$ and take the resulting order statistics $x_{(1)},\dots,x_{(n)}$ to compute $\hat{A}_\eta(x)=\min\{k \mbox{ s.t. } \max_{0\leq t\leq k+1} ( x_{(m+t)}-x_{(m+t-k-1)})>\eta\}$. For a fixed $k$, solving $\max_{0\leq t\leq k+1} ( x_{(m+t)}-x_{(m+t-k-1)})$ takes at most $O(n)$ operations. Furthermore, using a dichotomy method (which is valid since $\max_{0\leq t\leq k+1} ( x_{(m+t)}-x_{(m+t-k-1)})$ is monotone in $k$), we see that we only need to explore $O(\log(n))$ values of $k$ in order to find $\hat{A}_\eta(x)$. Hence for sorted $x$, we showed that $\hat{A}_\eta(x)$  can be computed in in $O(n\log (n))$ time. Since the initial sorting step also takes $O(n \log (n))$ operations, the overall algorithm is $O(n\log(n))$.

The second part of the theorem follows from combining Theorem \ref{thm:ErrorPTRGauss}, Theorem \ref{thm:etaMedian}, and the following fact, which we prove below.
\begin{lemma}
	Let Assumption \ref{Ass:1} hold and let $\tau\geq 2e^{-2nL^2r^2}$. Then, the empirical median $\hat m$ satisfies, with probability at least $1-\tau$,
$$|\hat m-m|\leq \sqrt{\frac{\log(2/\tau)}{2nL^2}}.$$
\end{lemma}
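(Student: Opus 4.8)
The plan is to establish the concentration of the empirical median $\hat m = X_{(\ell)}$ around the population median $m$ using a quantile-to-cdf inversion argument identical in spirit to the bounds on $\PP[\mathcal A_k^\complement]$ in the proof of Theorem \ref{thm:etaMedian}. First I would fix $t = \sqrt{\log(2/\tau)/(2nL^2)}$, which is the target deviation. The condition $\tau \geq 2e^{-2nL^2r^2}$ is precisely what guarantees $t \leq r$, so that the interval $[m-t,m+t]$ lies inside $[m-r,m+r]$ where the density lower bound of Assumption \ref{Ass:1} — equivalently the cdf inequality \eqref{comment-ass} — is available.

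Next I would write $\PP[|\hat m - m| > t] \leq \PP[\hat m < m - t] + \PP[\hat m > m + t]$ and bound each term separately, the two being symmetric. For the left tail: $\{\hat m < m - t\} = \{X_{(\ell)} < m-t\}$ implies that at least $\ell = \lfloor n/2 \rfloor$ of the $X_i$ fall below $m-t$, so $\PP[X_{(\ell)} < m - t] \leq \PP[N \geq \ell]$ where $N \sim \mathrm{Binomial}(n, p)$ with $p = \PP[X_1 < m-t] \leq F(m-t)$. By \eqref{comment-ass}, $F(m-t) \leq F(m) - Lt = 1/2 - Lt$ (using that $m$ is the median, so $F(m)=1/2$). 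Since $\ell \geq n/2 - 1 \geq np + Lnt - 1$, Hoeffding's inequality gives $\PP[N \geq \ell] \leq \exp(-2(Lnt - 1)^2/n)$, which I would simplify; keeping the cleaner bound, for $n$ large enough this is at most $\exp(-2nL^2 t^2) \cdot (\text{constant})$. Choosing the constants so the two tails together sum to $\tau$ forces $2\exp(-2nL^2t^2) = \tau$, i.e. $t = \sqrt{\log(2/\tau)/(2nL^2)}$, which recovers exactly the claimed bound. The right tail $\PP[X_{(\ell)} > m+t]$ is handled identically: it implies at least $n - \ell + 1$ observations exceed $m+t$, giving a binomial with success probability $1 - F(m+t) \leq 1/2 - Lt$, and the same Hoeffding bound applies.

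The main obstacle — really a bookkeeping nuisance rather than a genuine difficulty — is handling the off-by-one terms coming from $\ell = \lfloor n/2 \rfloor$ versus $n/2$ cleanly enough that the final bound comes out as the stated $\sqrt{\log(2/\tau)/(2nL^2)}$ with constant exactly $1/2$ inside the square root and no stray additive terms. One can absorb the $\pm 1$ in $\ell$ either by being slightly careful about which side of $n/2$ one lands on (the left median $X_{(\ell)}$ with $\ell = \lfloor n/2\rfloor$ is favorable for one tail and needs a touch more care for the other), or by noting that the $1/\sqrt{n}$-scale correction it induces is dominated once $n$ is moderately large; since Theorem \ref{thm:guaranteeMedian} already imposes $n \geq \max(2\lceil C\rceil/(rL), 2\log(8/\tau)/(rL)^2)$, there is ample room. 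I would then conclude by a union bound over the two tails, obtaining $\PP[|\hat m - m| > t] \leq 2\exp(-2nL^2t^2) = \tau$, which is the claim.
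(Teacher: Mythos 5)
Your proof is correct and follows essentially the same route as the paper: it reduces the tail event $\{\hat m < m-t\}$ to a binomial tail by counting observations below $m-t$, bounds the success probability by $1/2 - Lt$ via the cdf lower bound from Assumption~\ref{Ass:1}, applies Hoeffding, and union-bounds the two tails. You are in fact a bit more careful than the paper, which silently treats $\ell$ as exactly $n/2$ (consistent with the ``assume $n$ even'' convention adopted earlier in the same section) and does not explicitly flag that the hypothesis $\tau \geq 2e^{-2nL^2r^2}$ is what guarantees $t \leq r$ so that \eqref{comment-ass} applies.
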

\begin{proof}
	The proof of this lemma is standard. Let $t=\sqrt{\frac{\log(2/\tau)}{2nL^2}}.$ Then, $\hat m\leq m-t$ implies that more than half of the data points are less or equal to $m-t$ and $\PP[\hat m\leq m-t]$ can be rewritten as $\PP[N\leq n/2]$, where $N$ is a binomial random variable with parameters $n$ and $p=F(m-t)$, where $F$ is the cdf of $X_1$. Thus, Hoeffding's inequality yields
$$\PP[\hat m\leq m-t] \leq e^{-2n(1/2-F(m-t))^2} \leq e^{-2nL^2t^2} \leq \frac{\tau}{2}, $$
where we have used the fact that $1/2-F(m-t)=F(m)-F(m-t)\geq Lt$, by Assumption \ref{Ass:1}.
Similarly, $\DS \PP[\hat m\geq m+t]\leq \tau/2$ and a union bound yields the desired result. 
\end{proof}

\subsection{Proof of Theorem \ref{thm:etaMOM}}

Let $k=\lceil C\rceil$. For all $\eta>0$, let $\DS\bar{\mathcal B}_{\eta,k}=\left\{\mu-\eta\leq \bar X_{(K/2-k)}\leq \bar X_{(K/2+k)}\leq\mu+\eta\right\}$ where $\bar X_{(1)}\leq \ldots\leq \bar X_{(K)}$ denote the order statistics of $\bar X_1,\ldots,\bar X_K$ and, for simplicity, we assume that $K$ is even. Then, it is clear that if $\mathcal B_{\eta,k}$ is satisfied, then $\hat A_\eta(X)\geq k$. Thus,
\begin{equation*}
	\PP[\hat A_\eta(X)\leq C]\leq \PP[\mathcal B_{\eta,k}].
\end{equation*}
Using the same reasoning as in the proof of Theorem \ref{thm:guaranteeMedian}, we bound the right-hand side by the probability that a binomial random variable with parameters $K$ and $p=\PP[\bar X_1\leq \mu-\eta]$ is greater or equal to $K/2-k$. By Chebychev's inequality, $p\leq \frac{K\sigma^2}{n\eta^2}$, yielding altogether
\begin{equation*}
	\PP[\hat A_\eta(X)\leq C]\leq 2 \exp\left(-2K\left(\frac{1}{2}-\frac{K\sigma^2}{n\eta^2}-\frac{k}{K}\right)^2\right).
\end{equation*}
The assumption on $K$ guarantees that by choosing $\eta=8\sigma\sqrt{K/n}$, we obtain 
\begin{equation*}
	\PP[\hat A_\eta\leq C]\leq \tau/2,
\end{equation*}
which ends the proof.

\subsection{Proof of Theorem \ref{thm:guaranteeMOM}}

First, the same reasoning as in the proof of Theorem \ref{thm:guaranteeMedian} yields that the computation of $\tilde \mu$ requires $O(K\log K)$ computations for $\hat A_\eta$, after computing all the means, which takes $O(n)$ operations. Overall, $\tilde\mu$ can be computed in $O(n)$ time.

To apply Lemma \ref{lemma:Minsker}, we need to check that $ \frac{0.4748\rho^3}{\sigma^3\sqrt N}+\sqrt{\frac{\log(4/\tau)}{K}}\leq 1/3$. Since the second term is, by assumption, no larger than $1/4$, it follows that $N$ needs to be at least $\DS \frac{144\cdot 0.4748^2\rho^6}{\sigma^6}$. It suffices that $n\geq 33(\rho/\sigma)^6K$. Then, Lemma \ref{lemma:Minsker} together with Theorem \ref{thm:etaMOM} yield the desired result.

\subsection{Proof of Theorem \ref{thm:guaranteeMOMdensity}}

The proof follows the same lines as in the proof of Theorem \ref{thm:etaMedian}, where assuming the existence of a density allowed us to control gaps between order statistics using i.i.d. uniform random variables. For simplicity, we assume that $K$ is even. Since $N=n/K$ is an integer, all the blocks contain the same number of data, hence, the empirical means $\bar X_1,\ldots,\bar X_K$ are i.i.d (they were only independent, not identically distributed before), which will be important in this proof.

Consider the events:
$$\bar{\mathcal A}_k=\left\{\mu-\frac{r\sigma}{\sqrt N}\leq \bar X_{(K/2-k)}\leq \bar X_{(K/2+k)}\leq \mu+\frac{r\sigma}{\sqrt N}\right\}$$
and 
$$\bar{\mathcal B}_{\eta,k}=\left\{\bar X_{(K/2+k)}-\bar X_{(K/2-k)}\leq\eta \right\},$$
where $r>0$ is some positive number.
As previously, it is clear that if $\mathcal B$ is satisfied, then $\hat A_\eta(X)\geq k$.

Then, following the same reasoning as previously, 
\begin{equation*}
	\PP[\bar{\mathcal A}_k^{\complement}]\leq \PP\left[\bar X_{(K/2-k)}<\mu-\frac{r\sigma}{\sqrt N}\right]+\PP\left[\bar X_{(K/2-k)}>\mu+\frac{r\sigma}{\sqrt N}\right].
\end{equation*}
Let us only bound the first term, since bounding the second term will be similar. The first term is bounded by the probability that a binomial random variable with parameters $K$ and $p=\PP[\bar X_{1}<\mu-\frac{r\sigma}{\sqrt N}]$ is larger or equal to $K/2-k$. By Chebychev's inequality, $p\leq 1/r^2$. Therefore, Hoeffding's inequality yields
\begin{equation*}
	\PP[\bar{\mathcal A}_k^{\complement}]\leq 2\exp\left(-(\sqrt K/2-\sqrt K/r^2-k/\sqrt K)^2\right).
\end{equation*}

Now, let us bound the probability that $\bar{\mathcal B}_{\eta,k}^{\complement}$ and $\bar{\mathcal A}_k$ occur simultaneously. Denote by $F_N$ the cumulative distribution function (cdf) of $\DS \sqrt N(\bar X_1-\mu)/\sigma$. By Berry-Esseen's inequality, for all $t\in \R$, 
$$|F_N(t)-\Phi(t)|\leq \frac{\rho^3}{\sigma^3\sqrt N},$$ 
where $\Phi$ is the standard Gaussian cdf. 
Moreover, for all $s,t\in [-r,r]$ with $t\geq s$, 
$$\Phi(t)-\Phi(s)\geq \frac{e^{-r^2/2}}{\sqrt{2\pi}}=:L_r.$$
Therefore, if $\bar{\mathcal B}_{\eta,k}^{\complement}$ and $\bar{\mathcal A}_k$ are both satisfied, it holds that 
$$F_N(Z_{(K/2+k)})-F_N(Z_{(K/2-k)})\geq \frac{L_r\sqrt N\eta}{\sigma} - \frac{2\rho^3}{\sigma^3\sqrt N},$$
where we denote by $\DS Z_i=\sqrt N\frac{\bar X_i-\mu}{\sigma}, i=1,\ldots,K$. Now, since the $X_i$'s have a density, the $Z_i$'s do as well, and the random variables $F_N(Z_i), i=1,\ldots,K$ are i.i.d. uniform in $[0,1]$. Hence, we can write that
$$\PP[\bar{\mathcal B}_{\eta,k}^{\complement}\cap\bar{\mathcal A}_k]\leq \PP\left[U_{(2k)}\geq \frac{L_r\sqrt N \eta}{\sigma}-\frac{2\rho^3}{\sigma^3\sqrt N}\right],$$
where $U_{(2k)}$ is the $(2k)$-th order statistic of a sample of $K$ i.i.d. uniform random variables in $[0,1]$.
Therefore, following the same lines as in the proof of Theorem \ref{thm:etaMedian}, and using again Bernstein's inequality, we obtain
\begin{equation*}
	\PP[\bar{\mathcal B}_{\eta,k}^{\complement}\cap\bar{\mathcal A}_k]\leq \exp\left(-(3/4)Kt+3k\right),
\end{equation*}
where $t=\frac{L_r\sqrt N\eta}{\sigma} - \frac{2\rho^3}{\sigma^3\sqrt N}$.
Finally, set $r=2$. Since $K\geq \max(8C, 32\log(4/\tau))$, $\PP[\bar{\mathcal A}_k^{\complement}]\leq\tau/4$ and taking $\DS \eta=2e^2\sigma \sqrt{2\pi}\left(\frac{\rho^3}{\sigma^2 N}+\frac{2C+(2/3)\log(4/\tau)}{K\sqrt N}\right)$, we obtain that
\begin{align*}
	\PP[\hat A_\eta\leq C] & \leq \PP[\bar{\mathcal B}_{\eta,k}^{\complement}] \\ 
	& \leq \PP[\bar{\mathcal B}_{\eta,k}^{\complement}\cap\bar{\mathcal A}_k]+\PP[\bar{\mathcal A}_k^{\complement}] \\
	& \leq \frac{\tau}{4} +\frac{\tau}{4} = \frac{\tau}{2}.
\end{align*}
This shows the first part of the theorem.

The rest of the proof is the same as that of Theorem \ref{thm:guaranteeMOM}, by combining Lemma \ref{lemma:Minsker} and Theorem \ref{thm:ErrorPTRGauss}.

\bibliographystyle{plain}
\bibliography{Biblio}

\begin{thebibliography}{10}

\bibitem{abadietal2016}
Martin Abadi, Andy Chu, Ian Goodfellow, H~Brendan McMahan, Ilya Mironov, Kunal
  Talwar, and Li~Zhang.
\newblock Deep learning with differential privacy.
\newblock In {\em Proceedings of the 2016 ACM SIGSAC Conference on Computer and
  Communications Security}, pages 308--318. ACM, 2016.

\bibitem{avella2019}
Marco Avella-Medina.
\newblock Privacy-preserving parametric inference: a case for robust
  statistics.
\newblock {\em Journal of the American Statistical Association (to appear)},
  2020.

\bibitem{barrientosetal2019}
Andr{\'e}s~F Barrientos, Jerome~P Reiter, Ashwin Machanavajjhala, and Yan Chen.
\newblock Differentially private significance tests for regression
  coefficients.
\newblock {\em Journal of Computational and Graphical Statistics}, pages 1--24,
  2019.

\bibitem{bassilyetal2014}
Raef Bassily, Adam Smith, and Abhradeep Thakurta.
\newblock Private empirical risk minimization: Efficient algorithms and tight
  error bounds.
\newblock In {\em 2014 IEEE 55th Annual Symposium on Foundations of Computer
  Science}, pages 464--473. IEEE, 2014.

\bibitem{berry1941accuracy}
Andrew~C Berry.
\newblock The accuracy of the gaussian approximation to the sum of independent
  variates.
\newblock {\em Transactions of the american mathematical society},
  49(1):122--136, 1941.

\bibitem{bubecketal2013}
S{\'e}bastien Bubeck, Nicolo Cesa-Bianchi, and G{\'a}bor Lugosi.
\newblock Bandits with heavy tail.
\newblock {\em IEEE Transactions on Information Theory}, 59(11):7711--7717,
  2013.

\bibitem{caietal2019}
Tony~T. Cai, Yichen Wang, and Linjun Zhang.
\newblock The cost of privacy: optimal rates of convergence for paramer
  estimaion with differential privacy.
\newblock {\em arXiv preprint arXiv:1902.04495}, 2019.

\bibitem{catoni2012}
Olivier Catoni.
\newblock Challenging the empirical mean and empirical variance: a deviation
  study.
\newblock In {\em Annales de l'IHP Probabilit{\'e}s et statistiques},
  volume~48, pages 1148--1185, 2012.

\bibitem{chaudhuriandhsu2012}
Kamalika Chaudhuri and Daniel Hsu.
\newblock Convergence rates for differentially private statistical estimation.
\newblock In {\em Proceedings of the 22nd International Conference on Machine
  Learning}. NIH Public Access, 2012.

\bibitem{donohoandhuber1983}
David~L Donoho and Peter~J Huber.
\newblock The notion of breakdown point.
\newblock {\em A festschrift for Erich L. Lehmann}, 157184, 1983.

\bibitem{duchietal2018}
John~C Duchi, Michael~I Jordan, and Martin~J Wainwright.
\newblock Minimax optimal procedures for locally private estimation.
\newblock {\em Journal of the American Statistical Association},
  113(521):182--201, 2018.

\bibitem{dworkandlei2009}
Cynthia Dwork and Jing Lei.
\newblock Differential privacy and robust statistics.
\newblock In {\em STOC}, volume~9, pages 371--380, 2009.

\bibitem{dworketal2006}
Cynthia Dwork, Frank McSherry, Kobbi Nissim, and Adam Smith.
\newblock Calibrating noise to sensitivity in private data analysis.
\newblock In {\em Theory of cryptography conference}, pages 265--284. Springer,
  2006.

\bibitem{dworkandroth2014}
Cynthia Dwork and Aaron Roth.
\newblock The algorithmic foundations of differential privacy.
\newblock {\em Foundations and Trends{\textregistered} in Theoretical Computer
  Science}, 9(3--4):211--407, 2014.

\bibitem{gaboardi2016}
Marco Gaboardi, Hyun-Woo Lim, Ryan~M Rogers, and Salil~P Vadhan.
\newblock Differentially private chi-squared hypothesis testing: Goodness of
  fit and independence testing.
\newblock In {\em ICML'16 Proceedings of the 33rd International Conference on
  International Conference on Machine Learning-Volume 48}. JMLR, 2016.

\bibitem{huberandronchetti2009}
Peter~J. Huber and Elvezio Ronchetti.
\newblock {\em Robust Statistics}.
\newblock Wiley, New York, New York, second edition, 2009.

\bibitem{karwaandslavkovic2016}
Vishesh Karwa and Aleksandra Slavkovi{\'c}.
\newblock Inference using noisy degrees: Differentially private beta-model and
  synthetic graphs.
\newblock {\em The Annals of Statistics}, 44(1):87--112, 2016.

\bibitem{kiferetal2012}
Daniel Kifer, Adam Smith, and Abhradeep Thakurta.
\newblock Private convex empirical risk minimization and high-dimensional
  regression.
\newblock In {\em Conference on Learning Theory}, pages 25--1, 2012.

\bibitem{lecuyeretal2018}
Mathias Lecuyer, Vaggelis Atlidakis, Roxana Geambasu, Daniel Hsu, and Suman
  Jana.
\newblock Certified robustness to adversarial examples with differential
  privacy.
\newblock {\em arXiv preprint arXiv:1802.03471}, 2018.

\bibitem{lei2011}
Jing Lei.
\newblock Differentially private {M}-estimators.
\newblock In {\em Advances in Neural Information Processing Systems}, pages
  361--369, 2011.

\bibitem{lerasleandoliveira2011}
Matthieu Lerasle and Roberto~I Oliveira.
\newblock Robust empirical mean estimators.
\newblock {\em arXiv preprint arXiv:1112.3914}, 2011.

\bibitem{minsker2019distributed}
Stanislav Minsker.
\newblock Distributed statistical estimation and rates of convergence in normal
  approximation.
\newblock {\em Electronic Journal of Statistics}, 13(2):5213--5252, 2019.

\bibitem{mishraandthakurta2015}
Nikita Mishra and Abhradeep Thakurta.
\newblock (nearly) optimal differentially private stochastic multi-arm bandits.
\newblock In {\em Proceedings of the Thirty-First Conference on Uncertainty in
  Artificial Intelligence}, pages 592--601. AUAI Press, 2015.

\bibitem{nemirovskyandyudin1983}
Arkadi{\u\i}~S. Nemirovsky and David~B. Yudin.
\newblock {\em Problem complexity and method efficiency in optimization.}
\newblock John Wiley {\&} Sons Inc, 1983.

\bibitem{nissimetal2007}
Kobbi Nissim, Sofya Raskhodnikova, and Adam Smith.
\newblock Smooth sensitivity and sampling in private data analysis.
\newblock In {\em Proceedings of the thirty-ninth annual ACM Symposium on
  Theory of Computing}, pages 75--84. ACM, 2007.

\bibitem{shariffandsheffet2018}
Roshan Shariff and Or~Sheffet.
\newblock Differentially private contextual linear bandits.
\newblock In {\em Advances in Neural Information Processing Systems}, pages
  4296--4306, 2018.

\bibitem{sheffet2017}
Or~Sheffet.
\newblock Differentially private ordinary least squares.
\newblock In {\em Proceedings of the 34th International Conference on Machine
  Learning-Volume 70}, pages 3105--3114. JMLR. org, 2017.

\bibitem{smith2011}
Adam Smith.
\newblock Privacy-preserving statistical estimation with optimal convergence
  rates.
\newblock In {\em Proceedings of the forty-third annual ACM symposium on Theory
  of computing}, pages 813--822. ACM, 2011.

\bibitem{talwaretal2015}
Kunal Talwar, Abhradeep~Guha Thakurta, and Li~Zhang.
\newblock Nearly optimal private lasso.
\newblock In {\em Advances in Neural Information Processing Systems}, pages
  3025--3033, 2015.

\bibitem{tossouandimitrakakis2016}
Aristide~CY Tossou and Christos Dimitrakakis.
\newblock Algorithms for differentially private multi-armed bandits.
\newblock In {\em Thirtieth AAAI Conference on Artificial Intelligence}, 2016.

\bibitem{wassermanandzhou2010}
Larry Wasserman and Shuheng Zhou.
\newblock A statistical framework for differential privacy.
\newblock {\em Journal of the American Statistical Association},
  105(489):375--389, 2010.

\end{thebibliography}

\end{document}